\documentclass[letterpaper]{article}
\usepackage[letterpaper]{geometry}

\usepackage[utf8]{inputenc}
\usepackage[T1]{fontenc} 
\usepackage[cjk]{kotex}
\usepackage{amsmath}
\usepackage{amssymb}
\usepackage{graphicx}
\usepackage{url}
\usepackage{xcolor}
\usepackage{latexsym}

\usepackage{hyperref}
\definecolor{darkblue}{rgb}{0, 0, 0.5}
\hypersetup{colorlinks=true,citecolor=darkblue, linkcolor=darkblue, urlcolor=darkblue}
\usepackage{tabularx}
\usepackage{datetime}
\usepackage{arydshln}

\usepackage{setspace}

\usepackage{booktabs}

\usepackage[authoryear,round]{natbib}
\usepackage{algorithmic,algorithm}
\renewcommand{\algorithmiccomment}[1]{\bgroup\hfill$\triangleright$~#1\egroup}

\usepackage{amsthm}
\newtheorem{theorem}{Theorem}

\newtheorem{lemma}{Lemma}

\newtheorem{corollary}{Corollary}

\title{A gentle tutorial on Bock's algorithm for minimum directed spanning trees with a structured reformulation\thanks{The phrase \emph{a gentle tutorial} is inspired by the report \emph{A Gentle Tutorial of the EM Algorithm and Its Application to Parameter Estimation for Gaussian Mixture and Hidden Markov Models} \citep{bilmes-1998-gentle}.}}

\author{
\begin{tabular}{ccc}
 Yuxi Wang    &&  Jungyeul Park\\
 UBC    &&  KAIST \\
Canada    &&  South Korea \\
{\tt johnyuxiwang@students.ubc.ca} 
&$\qquad$& {\tt jungyeul@kaist.ac.kr} 
\end{tabular}
}
\date{ 
}

\begin{document}

\maketitle

\begin{abstract}
Bock's 1971 algorithm is an exact primal--dual method for the minimum-cost arborescence problem, but its Algol presentation obscures the interaction of its maintained arrays and label-directed control flow. We provide a self-contained tutorial comprising the original listing, a line-mapped explanation, a circuit-forming three-node example, and a complete trace of Bock's ten-node instance. We also present a structured reformulation that replaces temporary span-label changes with explicit component and trace state. Local tightness and contraction-progress results, together with an operational-correspondence theorem, establish that the reformulation preserves Bock's candidate choices, transfers, and final solution.
\end{abstract}

\tableofcontents

\doublespacing

\section{Introduction}

Finding a minimum-cost branching, or arborescence, in a directed graph is a classical problem in combinatorial optimization. Given a directed graph with a designated origin and a cost assigned to each feasible link, the problem is to select a rooted spanning arborescence of minimum total cost. The algorithms of \citet{chu-liu-1965-on-the-shortest} and \citet{edmonds-1967-optimum}, together with later refinements \citep{tarjan-1977-finding,camerini-fratta-maffioli-1979-a-note}, provide the most familiar exact solutions.

Bock's algorithm \citep{bock-1971-an-algorithm} is an exact primal--dual procedure for the same problem. Rather than constructing a sequence of reduced graphs containing explicit supernodes, it retains the original cost matrix and node indices throughout execution. Provisional links are recorded in \(I^{*}\), contracted circuits are represented by common \(\mathrm{SPAN}\) labels, and backward traces and exchanges are maintained through \(\bar I\) and \(\bar J\). Column adjustments in \(U_1\) determine the reduced costs used to select links entering the current span. The original presentation is difficult to reconstruct because it is written in Algol, uses label-directed control flow, and leaves the functions of these arrays implicit in their interaction.

This paper provides a self-contained account of Bock's procedure. We reproduce the original Algol listing, define its matrix conventions and maintained state, and give a label-by-label explanation that preserves its variables, tests, assignment order, and control flow. A three-node example isolates circuit formation, contraction, and the ensuing star-and-bar transfer. We then trace Bock's original ten-node matrix from initialization to termination, extending the partial computation in the source paper into a complete account of every candidate search, column adjustment, backward trace, contraction, and transfer required to obtain the reported optimum.

We further give a structured reformulation embodied by a readable implementation of the procedure. It replaces temporary negation of span labels with an explicit trace mask and represents contraction by merging visited components into the active component, while retaining Bock's candidate scan, column adjustments, starred predecessors, barred references, and transfer assignments. We prove the immediate tightness of each selected candidate, establish strict progress under contraction, and show operational correspondence between the structured formulation and the Algol procedure under the same scan order and tie-breaking convention. The resulting presentation makes Bock's algorithm independently readable, executable, and verifiable while preserving the computational organization of the original method.

\paragraph{Availability}
The accompanying repository, available at
\url{https://github.com/jungyeul/mst-bock/}, contains an
Algol-faithful Python implementation of Bock's procedure and the
structured reformulation presented in Section~\ref{sec:reformulation}.
A scanned copy of Bock's original paper is available at
\url{https://tinyurl.com/594b96ca}.

\section{The problem}

\paragraph{Minimum directed spanning tree rooted at an origin}
Let \(G=(V,E)\) be a directed graph with \(V=\{1,\dots,N\}\) and designated origin \(j_0\). Following Bock, we use \emph{link} for a directed arc and represent its cost by an \(N\times N\) matrix \(C\). The entry \(C[I,J]\) is the cost of the link \(I\to J\), where \(I\) is its initial node and \(J\) its terminal node. Thus, row \(I\) contains the costs of links leaving \(I\), and column \(J\) contains the costs of links entering \(J\).

Bock uses \(C[I,J]=M\) to denote an infeasible link. The value \(M\) is symbolic infinity rather than an admissible cost: links assigned \(M\) are excluded from candidate selection, and \(M\) is treated as greater than every finite value occurring in the procedure. The feasible link set can therefore be written as
\[
E=\{(I,J)\in V\times V:I\neq J,\ J\neq j_0,\ C[I,J]<M\}.
\]

A directed spanning tree rooted at \(j_0\), or rooted arborescence, is a set \(T\subseteq E\) such that every node \(J\neq j_0\) has exactly one incoming link, \(j_0\) has no incoming link, and every node is reachable from \(j_0\) by a directed path. These conditions imply that \(T\) contains \(N-1\) links and no directed circuit. If \(\mathcal{A}(G,j_0)\) denotes the set of rooted arborescences, the optimization problem is
\[
T^{*}\in
\arg\min_{T\in\mathcal{A}(G,j_0)}
\sum_{(I,J)\in T} C[I,J].
\]
If \(\mathcal{A}(G,j_0)\) is empty, the instance is infeasible.

\paragraph{Integer-programming formulation}
For each feasible link \((I,J)\in E\), let \(x_{IJ}=1\) if \(I\to J\) is selected and \(x_{IJ}=0\) otherwise. The minimum-cost arborescence problem can be written as
\begin{align}
\min_x\quad
&\sum_{(I,J)\in E} C[I,J]x_{IJ}
\label{eq:obj}\\
\text{subject to}\quad
&\sum_{\substack{I:(I,J)\in E}}x_{IJ}=1
&&\text{for every }J\in V\setminus\{j_0\},
\label{eq:in-degree}\\
&\sum_{\substack{(I,J)\in E\\I\notin S,\ J\in S}}x_{IJ}\geq1
&&\text{for every nonempty }S\subseteq V\setminus\{j_0\},
\label{eq:rooted-cut}\\
&x_{IJ}\in\{0,1\}
&&\text{for every }(I,J)\in E.
\label{eq:binary}
\end{align}

Constraint~\eqref{eq:in-degree} selects exactly one incoming link for each nonorigin node. Constraint~\eqref{eq:rooted-cut} requires at least one selected link to enter every nonempty set of nodes not containing the origin. A directed circuit disconnected from \(j_0\) would violate this condition because every node in the circuit would already receive its unique incoming link from within the set. The two constraint families therefore ensure that the selected links form an arborescence rooted at \(j_0\).

\paragraph{Linear-programming dual}
Replacing the binary condition in \eqref{eq:binary} by \(x_{IJ}\geq0\) gives the standard linear relaxation. The arborescence polytope defined by \eqref{eq:in-degree} and \eqref{eq:rooted-cut} is integral, so an optimal extreme point of this relaxation represents an arborescence.

Associate a free variable \(\pi_J\) with the equality constraint for each \(J\neq j_0\), and a nonnegative variable \(\lambda_S\) with each rooted-cut constraint. The dual is
\begin{align}
\max_{\pi,\lambda}\quad
&\sum_{J\neq j_0}\pi_J
 +\sum_{\emptyset\neq S\subseteq V\setminus\{j_0\}}\lambda_S
\label{eq:dual-obj}\\
\text{subject to}\quad
&\pi_J+
\sum_{\substack{\emptyset\neq S\subseteq V\setminus\{j_0\}\\
I\notin S,\ J\in S}}
\lambda_S
\leq C[I,J]
&&\text{for every }(I,J)\in E,
\label{eq:dual-link}\\
&\lambda_S\geq0
&&\text{for every nonempty }S\subseteq V\setminus\{j_0\}.
\label{eq:dual-sign}
\end{align}

The constraint in \eqref{eq:dual-link} assigns a price to the terminal node \(J\) and to every priced set entered by \(I\to J\). Its slack is the difference between \(C[I,J]\) and the sum of these prices. Complementary slackness requires every selected link to be tight with respect to an optimal dual solution.

\paragraph{Relation to Bock's maintained state}
Bock does not store the variables \(\pi_J\) and \(\lambda_S\) separately. The array \(U_1[J]\) stores the accumulated column adjustment used in the algorithm's reduced-cost comparisons, while common \(\mathrm{SPAN}\) labels identify the contracted node sets to which successive adjustments apply. For the current active span, L4 examines links \(I\to J\) satisfying
\[
J\leq K,\qquad
\mathrm{SPAN}[J]=H,\qquad
\mathrm{SPAN}[I]\neq H,
\]
and compares their reduced costs
\[
C[I,J]-U_1[J].
\]
If \(I_1\to J_1\) is the selected candidate, then
\[
DU=C[I_1,J_1]-U_1[J_1].
\]
At L5, \(DU\) is added to \(U_1[J]\) for every column \(J\leq K\) in the active span. Immediately after this update,
\[
C[I_1,J_1]-U_1[J_1]=0.
\]

The starred array \(I^{*}\) records the current primal selection: \(I^{*}[J]=I\) means that \(I\to J\) is provisionally selected. When these links form a directed circuit, L7 assigns a common \(\mathrm{SPAN}\) label to the participating spans. The bar arrays preserve the candidate links needed to revise the starred selection when the contracted circuit is resolved. In this way, Bock carries out the primal selection, reduced-cost adjustment, contraction, and exchange operations without explicitly storing the full family of cut variables in \eqref{eq:dual-obj}--\eqref{eq:dual-sign}.

The existing verbatim Algol figure should remain unchanged. Replace the surrounding input/state and description material with the following; insert the existing Figure~`\ref{algorithm1}` at the marked location. The three-node example should remain outside this section until it is revised.

\section{The algorithm}
\label{sec:algorithm}

\subsection{Original Algol procedure}

Figure~\ref{algorithm1} reproduces Bock's original Algol procedure verbatim. Its statement labels L1--L100 determine the control flow and assignment order used throughout this paper. L1, L97, and L100 handle input and output; L2--L11 contain the optimization procedure; L98 and L99 report infeasible and optimal termination, respectively.

\begin{figure}
\singlespacing
\centering

\scriptsize
\begin{tabular}{@{}p{1.5cm}p{\dimexpr\textwidth-1cm-2\tabcolsep\relax}@{}}

\underline{BEGIN}&  \underline{INTEGER} EX, N, I, J, J0, I1 , J1, I2, J2, K, H, H1, SS, M, DU, Z;\\
& \underline{INTEGER ARRAY} U1, I STAR, I BAR, J BAR, SPAN[1:50], C[1:50, 1:50]; \\
& \underline{PROCEDURE} INPUT; \underline{PROCEDURE} OUTPUT; \\

L1: & INPUT (EX, N, J0, M);\\
    & \underline{FOR} I := 1 \underline{STEP} 1 \underline{UNTIL} N \underline{DO} 
        \underline{FOR} J := 1 \underline{STEP} 1  \underline{UNTIL} N \underline{DO} 
            INPUT (C[I, J]);\\
    & GO TO L97; \\

L2: &   \underline{FOR} J:=1 \underline{STEP} 1 \underline{UNTIL} N \underline{DO}\\
    & $\quad$ \underline{BEGIN} 
                U1[J] := I STAR[J] := I BAR[J] := J BAR[J] := 0;
                SPAN[J] := J
            \underline{END}; \\
    & SS := N; K := Z := 0;\\

L3: & \underline{IF} K = N \underline{THEN} \underline{GO TO} L99; K := K + 1;\\
    & \underline{IF} K = J0 \underline{THEN} \underline{GO TO} L3;\\

~& \\
\underline{COMMENT} & STATEMENTS L4 THROUGH L5 FIND A LEAST-COST (CANDIDATE) LINK IN THE COLUMNWISE COMPLEMENT OF THE LARGEST SPANNED SUBMATRIX INTERSECTED BY COLUMN K; \\
L4: & DU := M; H := SPAN[K];\\
    & \underline{FOR} J := 1 \underline{STEP} 1 \underline{UNTIL} K \underline{DO} 
            \underline{IF} SPAN[J] = H \underline{THEN}\\
    & $\quad$ \underline{FOR} I:= 1 \underline{STEP} 1 \underline{UNTIL} N \underline{DO}\\
    & $\quad$$\quad$ \underline{IF} SPAN[I] != H $\land$ C[I,J] < M $\land$  C[I,J] - U1[J] < DU \underline{THEN}\\
    & $\quad$$\quad$$\quad$  \underline{BEGIN} 
                            DU := C[I,J] - U1[J];
                            I1 := I; J1 := J
                        \underline{END};\\
    & \underline{IF} DU = M \underline{THEN} \underline{GO TO} L98;\\

L5: & \underline{FOR} J := 1 \underline{STEP} 1 \underline{UNTIL} K \underline{DO} \underline{IF} SPAN[J] = H \underline{THEN} U1[J] := U1[J] + DU;\\
~& \\

\underline{COMMENT} & STATEMENTS L6 THROUGH L8 TRACE BACKWARD IN THE STARRED  PARTIAL TREE LEADING TO THE CANDIDATE LINK UNTIL A CIRCUIT OR THE ROOT OF THE PARTIAL TREE IS FOUND; \\

L6: &   J :=I1; \\
L7: & \underline{IF} SPAN[J] = H \underline{THEN}
            \underline{BEGIN} 
                SS := SS + l;
                \underline{FOR} J := 1 \underline{STEP} 1 \underline{UNTIL} K \underline{DO}
                    \underline{IF} SPAN[J] = H $\lor$ SPAN[J] < 0 \underline{THEN} 
                        SPAN[J] := SS;
                \underline{GO TO} L4
            \underline{END}; \\

L8: &    \underline{IF} I STAR[J] > 0 \underline{THEN}\\ 
& $\quad$\underline{BEGIN} \\
& $\quad$$\quad$ \underline{IF} SPAN[J] > 0 \underline{THEN}\\
& $\quad$$\quad$$\quad$ \underline{BEGIN} H1 := SPAN[J];\\
& $\quad$$\quad$$\quad$$\quad$ \underline{FOR} J2 := 1 \underline{STEP} 1 \underline{UNTIL} K \underline{DO} \underline{IF} SPAN[J2] = H1 \underline{THEN} SPAN[J2] := -SPAN[J2] \underline{END};\\
& $\quad$$\quad$ \underline{IF} I BAR[J] = 0 \underline{THEN}\\
& $\quad$$\quad$$\quad$ \underline{BEGIN}  I BAR[J] := I1; J BAR[J] : = J1 \underline{END};\\
& $\quad$$\quad$ J := I STAR[J]; \underline{GO TO} L7\\
& $\quad$ \underline{END};\\

~& \\
\underline{COMMENT}& STATEMENTS L9 THROUGH L11 STAR THE CANDIDATE LINK, MAKE NECESSARY TRANSFERS OF STARS AND BARS, AND UPDATE REFERENCES OF STARS TO BARS;\\

L9:&  \underline{FOR} J := 1 \underline{STEP} 1 \underline{UNTIL} K \underline{DO}
            \underline{IF} SPAN[J] < 0 \underline{THEN} 
                SPAN[J] := -SPAN[J]; \\
    & I2 := J2 := 0; \\

L10:&  \underline{FOR} J := 1 \underline{STEP} 1 \underline{UNTIL} K \underline{DO}\\
& $\quad$ \underline{IF} I BAR[J] = I1 $\land$ J BAR[J] = J1 \underline{THEN} \underline{BEGIN} I BAR[J] := I2; J BAR[J] := J2 \underline{END};\\
&         I := I BAR[J1]; J := J BAR[J1];
        I BAR[J1] := I2; J BAR[J1] := J2;
        I2 := I STAR[J1]; I STAR[J1] := I1;\\
&   \underline{IF} I2 = 0 \underline{THEN} \underline{GO TO} L3;
        J2 := J1; I1 := I; J1 := J;\\

L11: &   \underline{GO TO} L10; \\

& ~\\

\underline{COMMENT} & THE FOLLOWING STATEMENTS INDICATE WHETHER OR NOT  THERE IS A FEASIBLE SOLUTION FOR THE DATA PROVIDED  AND GIVE THE VALUES OF SPECIFIED VARIABLES AT THE POINT EITHER THAT INFEASIBILITY IS DETECTED OR THAT AN OPTIMUM SOLUTION IS ATTAINED;\\

L97: &    OUTPUT ('ALGORITHM 1, MINIMUM DIRECTED SPANNING TREE, EXAMPLE', EX, ',', N, 'NODES, ORIGIN NODE', J0); \underline{GO TO} L2;\\
L98: &   OUTPUT ('INFEASIBLE'); Z := M; \underline{GO TO} L100; \\
L99: &   \underline{FOR} J := 1 \underline{STEP} 1 \underline{UNTIL} N \underline{DO} \underline{IF} J != J0 \underline{THEN}
                Z := Z + C[I STAR[J], J];\\
    & OUTPUT ('OPTIMUM SOLUTION');\\

L100:   
&   OUTPUT ('Z =', Z);\\
&   OUTPUT ('J:'); \underline{FOR} J := 1 \underline{STEP} 1 \underline{UNTIL} N \underline{DO} OUTPUT(J);\\
&   OUTPUT('U1[J]:'); \underline{FOR} J := 1 \underline{STEP} 1 \underline{UNTIL} N \underline{DO} OUTPUT(U1[J]);\\
&   OUTPUT('I STAR[J]:'); \underline{FOR} J := 1 \underline{STEP} 1 \underline{UNTIL} N \underline{DO} OUTPUT(I STAR[J]);\\
&   OUTPUT('I BAR[J]:'); \underline{FOR} J := 1 \underline{STEP} 1 \underline{UNTIL} N \underline{DO} OUTPUT(I BAR[J]);\\
&   OUTPUT('J BAR[J]:'); \underline{FOR} J := 1 \underline{STEP} 1 \underline{UNTIL} N \underline{DO} OUTPUT(J BAR[J]);\\
&   OUTPUT('SPAN[J]:'); \underline{FOR} J := 1 \underline{STEP} 1 \underline{UNTIL} N \underline{DO} OUTPUT(SPAN[J])\\
\underline{END} &  ALGORITHM 1\\

\end{tabular}

\caption{Algorithm 1. Minimum Directed Spanning Tree (Bock, verbatim reproduction)} \label{algorithm1}
\end{figure}

\normalsize
\doublespacing

\subsection{Maintained state}

Bock processes the nonorigin columns in increasing index order. A link provisionally selected to enter node \(J\) is called \emph{starred}, and its initial node is recorded in \(I^{*}[J]\). Thus, \(I^{*}[J]=I\) means that \(I\to J\) is the current provisional link entering \(J\), whereas \(I^{*}[J]=0\) means that no link is starred into \(J\). When the iteration for column \(K\) terminates, every processed nonorigin column \(J\leq K\) has exactly one starred entering link, although circuit resolution may subsequently replace that link.

Nodes carrying the same positive \(\mathrm{SPAN}\) label form a \emph{span}. Each node initially forms a singleton span. Assigning a common label to several spans represents their contraction without changing the dimensions or indices of the cost matrix. A negative span label is temporary: it marks a span encountered during the current backward trace.

At the beginning of each candidate search, the starred links form a directed forest when every current span is treated as a single contracted node. The candidate selected at L4 enters the active span from a different span. If the backward trace from its initial node returns to the active span, the existing starred links provide a directed path from that span to the initial node of the candidate. Adding the candidate therefore closes a directed circuit. This invariant underlies the circuit test at L7.

The principal arrays are listed in Table~\ref{tab:bock-state}.

\begin{table}
\centering
\small
\begin{tabular}{r p{0.78\linewidth}@{}}
\toprule
Symbol & Role in Bock's procedure \\
\midrule
\(U_1[J]\) &
Amount subtracted from the costs of links entering \(J\). The reduced cost of \(I\to J\) is \(C[I,J]-U_1[J]\). The procedure changes \(U_1[J]\) when the span containing \(J\) is adjusted. \\

\(I^{*}[J]\) &
Initial node of the link currently starred into \(J\). The equality \(I^{*}[J]=I\) records the provisional link \(I\to J\), while \(I^{*}[J]=0\) records the absence of a starred link entering \(J\). \\

\(\bar I[J],\bar J[J]\) &
Candidate link whose backward trace passed through \(J\). If a subsequent transfer stars a new link into \(J\) and displaces its existing star, the stored pair identifies the next candidate to be starred. The pair \((0,0)\) means that no candidate is stored. \\

\(\mathrm{SPAN}[J]\) &
Label of the span containing \(J\). Equal positive labels identify nodes treated as one contracted span. A negative label temporarily marks a span encountered during a backward trace. \\
\bottomrule
\end{tabular}
\caption{Principal arrays maintained by Bock's procedure.}
\label{tab:bock-state}
\end{table}

The scalar \(K\) is the column currently being processed, and
\[
H=\mathrm{SPAN}[K]
\]
is the label of its active span. Among the processed columns, the members of this span are the indices \(J\leq K\) satisfying \(\mathrm{SPAN}[J]=H\). The scalar \(SS\) is the largest span label assigned so far and supplies a fresh label when spans are contracted. The temporary value \(H_1\) records the label of a span being marked during the backward trace.

The pair \((I_1,J_1)\) is the candidate link selected by the reduced-cost comparison, and \(DU\) is its reduced cost. During the star-and-bar transfer, \((I_2,J_2)\) records the link displaced during the preceding pass. The temporary pair \((I,J)\) stores the next candidate read from a bar entry. The scalar \(Z\) stores the objective value at termination.

\subsection{Control-flow summary}

Algorithm~\ref{alg:bock-overview} restates the executable part of the Algol procedure in line-mapped pseudocode. The labels in bold are Bock's original statement labels.

\begin{algorithm}
\footnotesize
\caption{Control-flow summary of Bock's original procedure}
\label{alg:bock-overview}
\begin{algorithmic}
\REQUIRE Number of nodes \(N\); origin \(j_0\); cost matrix \(C\); infeasible-link sentinel \(M\)
\ENSURE Starred array \(I^{*}\) defining a minimum directed spanning tree, or \textsc{Infeasible}

\STATE \textbf{L2 (initialization).}
For \(1\leq J\leq N\), set \(U_1[J]\), \(I^{*}[J]\), \(\bar I[J]\), and \(\bar J[J]\) to \(0\), and set \(\mathrm{SPAN}[J]\gets J\). Set \(SS\gets N\), \(K\gets0\), and \(Z\gets0\).

\STATE \textbf{L3 (column advance).}
If \(K=N\), transfer control to L99. Otherwise increment \(K\); if \(K=j_0\), return to L3, and otherwise continue to L4.

\STATE \textbf{L4 (candidate search).}
Set \(DU\gets M\) and \(H\gets\mathrm{SPAN}[K]\). For every \(J\leq K\) with \(\mathrm{SPAN}[J]=H\) and every \(I\) with \(\mathrm{SPAN}[I]\neq H\) and \(C[I,J]<M\), set
\[
(DU,I_1,J_1)\gets(C[I,J]-U_1[J],I,J)
\]
whenever \(C[I,J]-U_1[J]<DU\). If \(DU=M\), transfer control to L98.

\STATE \textbf{L5 (column adjustment).}
For every \(J\leq K\) satisfying \(\mathrm{SPAN}[J]=H\), set
\[
U_1[J]\gets U_1[J]+DU.
\]

\STATE \textbf{L6 (trace initialization).}
Set \(J\gets I_1\).

\STATE \textbf{L7 (circuit test and contraction).}
If \(\mathrm{SPAN}[J]=H\), increment \(SS\). For every \(q\leq K\) satisfying \(\mathrm{SPAN}[q]=H\) or \(\mathrm{SPAN}[q]<0\), set \(\mathrm{SPAN}[q]\gets SS\), and then return to L4.

\STATE \textbf{L8 (trace continuation or exit).}
If \(I^{*}[J]>0\), continue the trace by performing the following operations in order:
\begin{enumerate}
\item If \(\mathrm{SPAN}[J]>0\), set
\[
H_1\gets\mathrm{SPAN}[J],
\]
and, for every \(q\leq K\) satisfying \(\mathrm{SPAN}[q]=H_1\), set
\[
\mathrm{SPAN}[q]\gets-\mathrm{SPAN}[q].
\]

\item If \(\bar I[J]=0\), set
\[
(\bar I[J],\bar J[J])\gets(I_1,J_1).
\]

\item Set
\[
J\gets I^{*}[J]
\]
and return to L7.
\end{enumerate}
If \(I^{*}[J]=0\), fall through to L9.

\STATE \textbf{L9 (transfer initialization).}
Restore every negative span label among the columns \(J\leq K\), and set
\[
(I_2,J_2)\gets(0,0).
\]

\STATE \textbf{L10--L11 (star-and-bar transfer).}
For every \(q\leq K\) satisfying
\[
(\bar I[q],\bar J[q])=(I_1,J_1),
\]
set \((\bar I[q],\bar J[q])\gets(I_2,J_2)\). Set
\[
(I,J)\gets(\bar I[J_1],\bar J[J_1]),
\qquad
(\bar I[J_1],\bar J[J_1])\gets(I_2,J_2),
\]
then set
\[
I_2\gets I^{*}[J_1],
\qquad
I^{*}[J_1]\gets I_1.
\]
If \(I_2=0\), return to L3. Otherwise set
\[
(J_2,I_1,J_1)\gets(J_1,I,J)
\]
and return to L10.

\STATE \textbf{L98 (infeasibility).}
Report \textsc{Infeasible}, set \(Z\gets M\), and transfer control to L100.

\STATE \textbf{L99 (objective value).}
Set
\[
Z\gets\sum_{J\neq j_0}C[I^{*}[J],J]
\]
and report an optimum solution.

\STATE \textbf{L100 (output).}
Output \(Z\), \(U_1\), \(I^{*}\), \(\bar I\), \(\bar J\), and \(\mathrm{SPAN}\).
\end{algorithmic}
\end{algorithm}

\subsection{Description}

\paragraph{Initialization and column scan}
L2 initializes every column adjustment, star, and bar entry to zero and gives every node its own span label:
\[
U_1[J]=I^{*}[J]=\bar I[J]=\bar J[J]=0,
\qquad
\mathrm{SPAN}[J]=J.
\]
It sets \(SS=N\), so the first contracted span receives label \(N+1\), and initializes \(K=Z=0\).

L3 processes the columns in increasing order. The origin column \(j_0\) is skipped because no link enters the origin in a rooted arborescence. If L3 is entered with \(K=N\), every column has been considered and control passes to L99.

\paragraph{Candidate selection and column adjustment}
For the current column \(K\), L4 sets
\[
H\gets\mathrm{SPAN}[K].
\]
The candidate search is restricted to columns \(J\leq K\) in the active span and links whose initial nodes lie outside that span:
\[
\mathrm{SPAN}[J]=H,
\qquad
\mathrm{SPAN}[I]\neq H,
\qquad
C[I,J]<M.
\]
For every such link, L4 computes
\[
C[I,J]-U_1[J].
\]
The pair \((I_1,J_1)\) is replaced only when this value is strictly smaller than the current value of \(DU\). Ties are therefore resolved by the first minimum encountered in Bock's scan order: first by increasing \(J\), then by increasing \(I\).

If no candidate is found, \(DU\) remains equal to \(M\). No feasible link then enters the active span from outside, so L98 reports the instance infeasible.

Otherwise,
\[
DU=C[I_1,J_1]-U_1[J_1]
\]
is the least reduced cost found by the search. L5 adds \(DU\) to the adjustment of every processed column in the active span:
\[
U_1[J]\gets U_1[J]+DU
\qquad
\text{for every \(J\leq K\) with \(\mathrm{SPAN}[J]=H\)}.
\]
Immediately after this update, the selected candidate has zero reduced cost:
\[
C[I_1,J_1]-U_1[J_1]=0.
\]
Later adjustments to a larger contracted span may change the value \(C[I,J]-U_1[J]\) for a link selected during an earlier search.

\paragraph{Backward trace}
L6 begins the trace at the initial node of the candidate:
\[
J\gets I_1.
\]
L7 first tests whether the trace has reached the active span. If
\[
\mathrm{SPAN}[J]=H,
\]
the candidate and the starred path encountered by the trace form a directed circuit, and L7 performs the contraction described below.

If \(\mathrm{SPAN}[J]\neq H\), L8 examines \(I^{*}[J]\). When \(I^{*}[J]=0\), no link is starred into \(J\); the trace has reached a root of the current starred forest, and control falls through to L9.

When \(I^{*}[J]>0\), the trace continues through the starred link \(I^{*}[J]\to J\). If \(\mathrm{SPAN}[J]>0\), its span has not yet been marked during the current trace. L8 stores its positive label in \(H_1\) and negates the label of every processed column belonging to the same span:
\[
H_1\gets\mathrm{SPAN}[J],
\qquad
\mathrm{SPAN}[q]\gets-\mathrm{SPAN}[q]
\]
for every \(q\leq K\) satisfying \(\mathrm{SPAN}[q]=H_1\). The sign records which spans have been traversed without changing their membership.

Within this same \(I^{*}[J]>0\) branch, if the bar entry at the current trace node is empty, L8 records the candidate that led the trace through that node:
\[
(\bar I[J],\bar J[J])\gets(I_1,J_1).
\]
A nonempty bar entry is left unchanged. The stored pair remains in place during contraction and supplies a later candidate if the star entering \(J\) is displaced during the transfer. L8 then follows the starred link backward:
\[
J\gets I^{*}[J],
\]
and returns to L7. Repeated execution of L7--L8 records the candidate at every encountered node whose bar entry is empty.

\paragraph{Circuit contraction}
Suppose the trace reaches a node whose span label is \(H\). In the contracted view of the starred forest, the starred links traversed by the trace give a directed path from the active span to \(I_1\). Because \(J_1\) belongs to the active span, the candidate \(I_1\to J_1\) closes this path into a directed circuit.

L7 increments
\[
SS\gets SS+1
\]
and assigns the new label to every processed column belonging either to the active span or to a span marked during the trace:
\[
\mathrm{SPAN}[q]\gets SS
\]
whenever \(q\leq K\) and either \(\mathrm{SPAN}[q]=H\) or \(\mathrm{SPAN}[q]<0\). The active span and all traversed spans are thereby represented as one contracted span. The cost matrix, node indices, starred links, and bar entries are not changed by this operation.

Control returns directly to L4. Because \(K\) has not changed, the new active label is
\[
H=\mathrm{SPAN}[K]=SS,
\]
and the candidate search is repeated over all processed columns belonging to the contracted span. The search now excludes every initial node carrying that same span label, so only links entering the contracted span from outside are considered.

\paragraph{Transfer initialization}
If the backward trace reaches a node with \(I^{*}[J]=0\), no circuit is formed. L9 restores every negative span label among the processed columns:
\[
\mathrm{SPAN}[J]\gets-\mathrm{SPAN}[J]
\qquad
\text{whenever \(\mathrm{SPAN}[J]<0\)}.
\]
The bar entries created during the trace remain unchanged. L9 then initializes
\[
(I_2,J_2)\gets(0,0)
\]
and passes control to L10.

\paragraph{Star-and-bar transfer}
The transfer is an exchange chain. At the beginning of each pass through L10, \((I_1,J_1)\) is the link currently to be starred. The pair \((I_2,J_2)\) is the link displaced during the preceding pass; it is \((0,0)\) on the first pass.

L10 first finds every bar entry referring to the current candidate:
\[
(\bar I[q],\bar J[q])=(I_1,J_1).
\]
Each such reference is redirected to the previously displaced pair:
\[
(\bar I[q],\bar J[q])\gets(I_2,J_2).
\]
The procedure then reads the candidate stored at the terminal node \(J_1\):
\[
(I,J)\gets(\bar I[J_1],\bar J[J_1])
\]
and replaces that bar entry with the previously displaced pair:
\[
(\bar I[J_1],\bar J[J_1])\gets(I_2,J_2).
\]

The existing star entering \(J_1\) is saved, and the current candidate is installed:
\[
I_2\gets I^{*}[J_1],
\qquad
I^{*}[J_1]\gets I_1.
\]
If \(I_2=0\), column \(J_1\) previously had no starred entering link. No star has been displaced, the exchange chain terminates, and control returns to L3.

If \(I_2\neq0\), the displaced link is \(I_2\to J_1\). The candidate read from the bar entry becomes the next link to be starred, while the displaced link becomes the pair carried into the next pass:
\[
J_2\gets J_1,
\qquad
I_1\gets I,
\qquad
J_1\gets J.
\]
L11 returns unconditionally to L10. The transfer continues until a candidate is installed at a column whose previous star is zero.

\paragraph{Termination}
If L4 finds no feasible link entering the active span, L98 reports \textsc{infeasible}, sets \(Z=M\), and transfers control to L100.

If L3 is entered with \(K=N\), L99 computes
\[
Z=\sum_{J\neq j_0}C[I^{*}[J],J].
\]
For every nonorigin node \(J\), the equality \(I^{*}[J]=I\) identifies the selected link \(I\to J\). These links form the minimum directed spanning tree rooted at \(j_0\). L100 outputs \(Z\) and the final contents of \(U_1\), \(I^{*}\), \(\bar I\), \(\bar J\), and \(\mathrm{SPAN}\).

\subsection{An example}
\label{sec:small-example}

Consider \(N=3\), origin \(j_0=1\), and the cost matrix
\begin{equation}
C=
\begin{pmatrix}
M & 50 & 60\\
M & M  & 10\\
M & 20 & M
\end{pmatrix}.
\label{eq:small-bock-matrix}
\end{equation}
The feasible links are \(1\to2\) of cost \(50\), \(1\to3\) of cost \(60\), \(2\to3\) of cost \(10\), and \(3\to2\) of cost \(20\). The two least-cost links entering nodes \(2\) and \(3\) are \(3\to2\) and \(2\to3\), which form a directed circuit. The example therefore exercises candidate selection, backward tracing, circuit contraction, and star-and-bar transfer.

\paragraph{Initialization and the origin column}
At L2, the procedure initializes
\[
U_1=(0,0,0),\qquad
I^{*}=(0,0,0),\qquad
\bar I=\bar J=(0,0,0),
\]
\[
\mathrm{SPAN}=(1,2,3),\qquad
SS=3,\qquad K=0,\qquad Z=0.
\]
The first execution of L3 increments \(K\) to \(1\). Because \(K=j_0\), the origin column is skipped. A second execution of L3 increments \(K\) to \(2\).

\paragraph{Column \(K=2\)}
The active span is
\[
H=\mathrm{SPAN}[2]=2.
\]
At L4, the procedure examines feasible links entering this span from nodes outside it. The candidates and their reduced costs are
\[
1\to2:\quad C[1,2]-U_1[2]=50,
\]
\[
3\to2:\quad C[3,2]-U_1[2]=20.
\]
The selected candidate is therefore
\[
(I_1,J_1)=(3,2),\qquad DU=20.
\]
At L5, \(DU\) is added to the adjustment of every processed column in the active span. Only column \(2\) satisfies this condition, so
\[
U_1=(0,20,0).
\]
The selected link is now limiting:
\[
C[3,2]-U_1[2]=20-20=0.
\]

At L6, the backward trace begins at the initial node of the candidate:
\[
J\gets I_1=3.
\]
Because \(\mathrm{SPAN}[3]\neq H\) and \(I^{*}[3]=0\), the trace has reached a node with no starred entering link. Control therefore passes directly to L9. No starred link has been traversed, so L8 does not create a bar entry.

At L9, any negative span labels would be restored, but none exist. The displaced-link pair is initialized as
\[
(I_2,J_2)\gets(0,0).
\]
On the first pass through L10, no bar entry contains the current candidate \((3,2)\). The bar entry at its terminal node is empty:
\[
(\bar I[2],\bar J[2])=(0,0).
\]
The procedure consequently leaves that entry empty, records the previous star
\[
I_2\gets I^{*}[2]=0,
\]
and installs the candidate:
\[
I^{*}[2]\gets3.
\]
Since \(I_2=0\), no starred link has been displaced and the transfer terminates. The state after processing column \(2\) is
\[
U_1=(0,20,0),\qquad
I^{*}=(0,3,0),\qquad
\bar I=\bar J=(0,0,0),
\]
\[
\mathrm{SPAN}=(1,2,3).
\]
Thus, \(3\to2\) is provisionally starred.

\paragraph{Column \(K=3\): candidate selection}
Control returns to L3, which increments \(K\) to \(3\). The active span is
\[
H=\mathrm{SPAN}[3]=3.
\]
At L4, the feasible links entering node \(3\) have reduced costs
\[
1\to3:\quad C[1,3]-U_1[3]=60,
\]
\[
2\to3:\quad C[2,3]-U_1[3]=10.
\]
The selected candidate is
\[
(I_1,J_1)=(2,3),\qquad DU=10.
\]
At L5, the adjustment of column \(3\) becomes
\[
U_1[3]\gets U_1[3]+10=10,
\]
and hence
\[
U_1=(0,20,10).
\]
The selected link now has zero reduced cost:
\[
C[2,3]-U_1[3]=10-10=0.
\]

\paragraph{Backward trace and bar insertion}
At L6, the trace begins at
\[
J\gets I_1=2.
\]
Node \(2\) does not belong to the active span, since
\[
\mathrm{SPAN}[2]=2\neq3=H.
\]
It does, however, have the starred predecessor
\[
I^{*}[2]=3.
\]
L8 therefore continues the trace through the existing starred link \(3\to2\).

The span containing node \(2\) is temporarily marked by negating its label:
\[
\mathrm{SPAN}[2]\gets-2.
\]
Because the bar entry at node \(2\) is empty, the current candidate is recorded there:
\[
(\bar I[2],\bar J[2])\gets(I_1,J_1)=(2,3).
\]
The arrays are now
\[
\bar I=(0,2,0),\qquad
\bar J=(0,3,0).
\]
This entry records that \(2\to3\) is to become the next candidate if the starred link entering node \(2\) is later displaced. The trace then follows the current star backward:
\[
J\gets I^{*}[2]=3.
\]

\paragraph{Circuit detection and contraction}
Control returns to L7 with \(J=3\). Since
\[
\mathrm{SPAN}[3]=H=3,
\]
the trace has returned to the active span. The starred link \(3\to2\), followed by the candidate \(2\to3\), forms the circuit
\[
3\to2\to3.
\]

L7 increments the largest span label:
\[
SS\gets SS+1=4.
\]
It then assigns the new label \(4\) to every processed column whose span is either the active span \(H=3\) or a span marked negatively during the trace. Consequently,
\[
\mathrm{SPAN}=(1,4,4).
\]
Nodes \(2\) and \(3\) now constitute a single contracted span. The contraction changes neither the starred array nor the bar arrays:
\[
I^{*}=(0,3,0),\qquad
\bar I=(0,2,0),\qquad
\bar J=(0,3,0).
\]
Control returns to L4 with
\[
H\gets SS=4.
\]

\paragraph{Candidate selection for the contracted span}
The renewed search considers feasible links entering the contracted span from outside it. Since node \(1\) is the only node outside the span, the candidates are \(1\to2\) and \(1\to3\). Their reduced costs are
\[
1\to2:\quad C[1,2]-U_1[2]=50-20=30,
\]
\[
1\to3:\quad C[1,3]-U_1[3]=60-10=50.
\]
L4 therefore selects
\[
(I_1,J_1)=(1,2),\qquad DU=30.
\]

At L5, both processed columns belonging to span \(4\) receive the adjustment:
\[
U_1[2]\gets20+30=50,\qquad
U_1[3]\gets10+30=40.
\]
Thus,
\[
U_1=(0,50,40).
\]
The selected link satisfies
\[
C[1,2]-U_1[2]=50-50=0.
\]

At L6, the trace begins at node \(1\). Since
\[
\mathrm{SPAN}[1]=1\neq4
\qquad\text{and}\qquad
I^{*}[1]=0,
\]
the trace reaches an unstarred node immediately. Control passes to L9, which initializes
\[
(I_2,J_2)\gets(0,0).
\]

\paragraph{First pass through the transfer}
The current candidate is \(1\to2\), represented by
\[
(I_1,J_1)=(1,2).
\]
No bar entry contains this pair, so the first operation at L10 makes no change. The procedure then reads the bar entry at the terminal node \(J_1=2\):
\[
(I,J)\gets(\bar I[2],\bar J[2])=(2,3).
\]
That entry is replaced by the displaced-link pair, which is initially empty:
\[
(\bar I[2],\bar J[2])\gets(I_2,J_2)=(0,0).
\]
The star currently entering node \(2\) is saved:
\[
I_2\gets I^{*}[2]=3,
\]
and the candidate \(1\to2\) replaces it:
\[
I^{*}[2]\gets I_1=1.
\]
The link \(3\to2\) has therefore been displaced.

Because \(I_2=3\neq0\), the transfer continues. L11 sets
\[
J_2\gets J_1=2,\qquad
I_1\gets I=2,\qquad
J_1\gets J=3.
\]
The displaced link is now represented by
\[
(I_2,J_2)=(3,2),
\]
and the candidate recovered from the bar entry becomes
\[
(I_1,J_1)=(2,3).
\]

\paragraph{Second pass through the transfer}
The current candidate is now \(2\to3\). No bar entry contains the pair \((2,3)\), because the entry at node \(2\) was cleared during the preceding pass. The procedure reads the empty bar entry at node \(3\):
\[
(I,J)\gets(\bar I[3],\bar J[3])=(0,0).
\]
It replaces this entry by the displaced link:
\[
(\bar I[3],\bar J[3])\gets(I_2,J_2)=(3,2).
\]
The bar arrays consequently become
\[
\bar I=(0,0,3),\qquad
\bar J=(0,0,2).
\]

The previous star entering node \(3\) is then saved:
\[
I_2\gets I^{*}[3]=0,
\]
and the candidate \(2\to3\) is installed:
\[
I^{*}[3]\gets I_1=2.
\]
Since \(I_2=0\), no further star has been displaced and the transfer terminates. The final starred array is
\[
I^{*}=(0,1,2).
\]

\paragraph{Termination}
Control returns to L3. Since \(K=N=3\), the procedure transfers to L99 and computes
\[
Z
=\sum_{J\neq j_0}C[I^{*}[J],J]
=C[1,2]+C[2,3]
=50+10
=60.
\]
The final state is
\[
U_1=(0,50,40),\qquad
I^{*}=(0,1,2),
\]
\[
\bar I=(0,0,3),\qquad
\bar J=(0,0,2),\qquad
\mathrm{SPAN}=(1,4,4).
\]
The starred array represents the rooted arborescence
\[
T^{*}=\{1\to2,\;2\to3\}.
\]
The two other feasible arborescences are
\[
\{1\to3,\;3\to2\},
\qquad 60+20=80,
\]
and
\[
\{1\to2,\;1\to3\},
\qquad 50+60=110.
\]
Hence, the tree returned by Bock's procedure is the unique minimum-cost arborescence, with objective value \(Z=60\).


\section{Tracing Bock's algorithm on the original 10-by-10 matrix}

\paragraph{Input}
We trace Algorithm~1 on Bock's original $10\times 10$ instance.  The entry $C[i,j]$ is the cost of the directed arc $i\to j$ (row $i$, column $j$).  We set $N=10$, choose the origin (root) column $J_0=10$, and use a large sentinel $M$ to represent infeasible arcs (no infeasible arcs occur in this matrix).  The input cost matrix is
\begin{equation*}
C=\begin{pmatrix}
0&52&88&7&2&9&9&29&69&79\\
12&0&2&13&1&9&9&64&31&93\\
5&82&0&7&1&9&9&27&83&49\\
10&3&59&0&0&9&9&74&16&42\\
17&55&96&32&0&9&9&75&65&87\\
22&89&96&30&67&0&5&52&42&86\\
36&47&64&72&56&8&0&51&52&61\\
58&30&33&43&95&28&25&0&3&47\\
73&55&64&43&69&42&81&6&0&7\\
89&61&97&63&25&26&71&72&43&0
\end{pmatrix}.
\end{equation*}

\paragraph{State variables}
The algorithm maintains:
(i) dual potentials $U_1[j]$,
(ii) a starred predecessor array $I^{*}(j)$ encoding the current partial branching,
(iii) bar arrays $(\bar{I}(j),\bar{J}(j))$ used to record deferred exchanges,
and (iv) span labels $\mathrm{SPAN}[j]$ encoding contracted components (supernodes).  The integer $SS$ stores the next available span label for contractions.

\paragraph{Initialization (L2)}
Initialize uniformly:
\[
U_1[j]\gets 0,\quad I^{*}(j)\gets 0,\quad \bar{I}(j)\gets 0,\quad \bar{J}(j)\gets 0,\quad \mathrm{SPAN}[j]\gets j
\qquad (j=1,\dots,10).
\]
Set $SS\gets N=10$, $K\gets 0$, and $Z\gets 0$.

\subsection{\texorpdfstring{\(K=1\)}{K = 1}}

\paragraph{L3 (advance)}
Set $K:=1$. If $K=J_0(=10)$, go to L3 (skip). Otherwise set
\[
H:=\mathrm{SPAN}[1]=1.
\]

\paragraph{L4 (candidate search)}
Initialize $DU:=M$. Scan all $J\le 1$ with $\mathrm{SPAN}[J]=H$ (only $J=1$) and all rows
$I\in\{1,\dots,10\}$ with $\mathrm{SPAN}[I]\neq H$ (i.e.\ $I\neq 1$).
Since $U_1[1]=0$, minimize $C[I,1]-U_1[1]=C[I,1]$.
The minimum is attained at $(I_1,J_1)=(3,1)$ with
\[
DU=C[3,1]-U_1[1]=5-0=5.
\]

\paragraph{L5 (dual update)}
For all $J\le 1$ with $\mathrm{SPAN}[J]=H$ (only $J=1$),
\[
U_1[1]\gets U_1[1]+DU=0+5=5.
\]

\paragraph{L6--L8 (trace)}
Set $J:=I_1=3$.
Since $I^{*}(3)=0$, L8 fails immediately and control exits the trace to L9 (no contraction occurs).

\paragraph{L9 (restore span signs)}
For $J\le 1$, no $\mathrm{SPAN}[J]$ is negative, so no change is made.
Initialize $(I_2,J_2):=(0,0)$.

\paragraph{L10 (star and transfers)}
Apply L10 once: no bars match $(I_1,J_1)=(3,1)$, and $(\bar I(1),\bar J(1))=(0,0)$.
Set $I_2:=I^{*}(1)=0$ and star column $1$:
\[
I^{*}(1)\gets I_1=3.
\]
Since $I_2=0$, go to L3.

\paragraph{State after $K=1$}
\[
U_1=(5,0,0,0,0,0,0,0,0,0),\qquad I^{*}(1)=3,\qquad \mathrm{SPAN}[1]=1.
\]

\subsection{\texorpdfstring{\(K=2\)}{K = 2}}

\paragraph{L3 (advance)}
Set $K:=2$. If $K=J_0(=10)$, go to L3 (skip). Otherwise set
\[
H:=\mathrm{SPAN}[2]=2.
\]

\paragraph{L4 (candidate search)}
Initialize $DU:=M$. Scan all $J\le 2$ with $\mathrm{SPAN}[J]=H$ (only $J=2$) and all rows
$I\in\{1,\dots,10\}$ with $\mathrm{SPAN}[I]\neq H$ (i.e.\ $I\neq 2$).
Since $U_1[2]=0$, minimize $C[I,2]-U_1[2]=C[I,2]$.
The minimum is attained at $(I_1,J_1)=(4,2)$ with
\[
DU=C[4,2]-U_1[2]=3-0=3.
\]

\paragraph{L5 (dual update)}
For all $J\le 2$ with $\mathrm{SPAN}[J]=H$ (only $J=2$),
\[
U_1[2]\gets U_1[2]+DU=0+3=3.
\]

\paragraph{L6--L8 (trace)}
Set $J:=I_1=4$.
Since $I^{*}(4)=0$, L8 fails immediately and control exits the trace to L9 (no contraction occurs).

\paragraph{L9 (restore span signs)}
For $J\le 2$, no $\mathrm{SPAN}[J]$ is negative, so no change is made.
Initialize $(I_2,J_2):=(0,0)$.

\paragraph{L10 (star and transfers)}
Apply L10 once: no bars match $(I_1,J_1)=(4,2)$, and $(\bar I(2),\bar J(2))=(0,0)$.
Set $I_2:=I^{*}(2)=0$ and star column $2$:
\[
I^{*}(2)\gets I_1=4.
\]
Since $I_2=0$, go to L3.

\paragraph{State after $K=2$}
\[
U_1=(5,3,0,0,0,0,0,0,0,0),\qquad I^{*}(1)=3,\qquad I^{*}(2)=4,
\]
\[
(\bar I(1),\bar J(1))=(0,0),\qquad (\bar I(2),\bar J(2))=(0,0),\qquad
\mathrm{SPAN}[1]=1,\ \mathrm{SPAN}[2]=2.
\]

\subsection{\texorpdfstring{\(K=3\)}{K = 3}}

\paragraph{L3 (advance)}
Set $K:=3$. If $K=J_0(=10)$, go to L3 (skip). Otherwise set
\[
H:=\mathrm{SPAN}[3]=3.
\]

\paragraph{L4 (candidate search)}
Initialize $DU:=M$. Scan all $J\le 3$ with $\mathrm{SPAN}[J]=H$ (only $J=3$) and all rows
$I\in\{1,\dots,10\}$ with $\mathrm{SPAN}[I]\neq H$ (i.e.\ $I\neq 3$).
Since $U_1[3]=0$, minimize $C[I,3]-U_1[3]=C[I,3]$.
The minimum is attained at $(I_1,J_1)=(2,3)$ with
\[
DU=C[2,3]-U_1[3]=2-0=2.
\]

\paragraph{L5 (dual update)}
For all $J\le 3$ with $\mathrm{SPAN}[J]=H$ (only $J=3$),
\[
U_1[3]\gets U_1[3]+DU=0+2=2.
\]

\paragraph{L6--L8 (trace)}
Set $J:=I_1=2$ and enter L7.
Since $\mathrm{SPAN}[2]\neq H$, L7 fails and control goes to L8.
Because $I^{*}(2)>0$ (currently $I^{*}(2)=4$), execute L8:
\begin{itemize}
\item Since $\mathrm{SPAN}[2]>0$, let $H_1:=\mathrm{SPAN}[2]=2$ and negate all span labels
among $J_2\le K$ with $\mathrm{SPAN}[J_2]=H_1$. Here this affects only $J_2=2$:
\[
\mathrm{SPAN}[2]\gets -2.
\]
\item Since $\bar I(2)=0$, record the deferred entering pair:
\[
(\bar I(2),\bar J(2))\gets (I_1,J_1)=(2,3).
\]
\item Follow the starred predecessor:
\[
J\gets I^{*}(2)=4,
\]
and return to L7.
\end{itemize}
At $J=4$, $\mathrm{SPAN}[4]\neq H$ and $I^{*}(4)=0$, so L8 fails and the trace terminates (no contraction).

\paragraph{L9 (restore span signs)}
For all $J\le 3$ with $\mathrm{SPAN}[J]<0$, flip the sign back. Here only $J=2$:
\[
\mathrm{SPAN}[2]=-2\mapsto 2.
\]
Initialize $(I_2,J_2):=(0,0)$.

\paragraph{L10 (star and transfers)}
Apply L10 once with $(I_1,J_1)=(2,3)$:
\begin{itemize}
\item Replace any bars equal to $(2,3)$ among $J\le 3$ by $(I_2,J_2)=(0,0)$.
Here this matches column $2$, so
\[
(\bar I(2),\bar J(2))\gets (0,0).
\]
\item Fetch $(I,J):=(\bar I(J_1),\bar J(J_1))=(\bar I(3),\bar J(3))=(0,0)$ and set
\[
(\bar I(3),\bar J(3))\gets (I_2,J_2)=(0,0).
\]
\item Displace and star column $J_1=3$:
\[
I_2\gets I^{*}(3)=0,\qquad I^{*}(3)\gets I_1=2.
\]
Since $I_2=0$, go to L3.
\end{itemize}

\paragraph{State after $K=3$}
\[
U_1=(5,3,2,0,0,0,0,0,0,0),\qquad I^{*}(1)=3,\qquad I^{*}(2)=4,\qquad I^{*}(3)=2,
\]
\[
(\bar I(1),\bar J(1))=(0,0),\quad (\bar I(2),\bar J(2))=(0,0),\quad (\bar I(3),\bar J(3))=(0,0),
\]
\[
\mathrm{SPAN}[1]=1,\ \mathrm{SPAN}[2]=2,\ \mathrm{SPAN}[3]=3.
\]

\subsection{\texorpdfstring{\(K=4\)}{K = 4} (first circuit and its resolution)}

\paragraph{L3 (advance)}
Set $K:=4$. If $K=J_0(=10)$, go to L3 (skip). Otherwise set
\[
H:=\mathrm{SPAN}[4]=4.
\]

\paragraph{L4 (first candidate search)}
Initialize $DU:=M$. Scan all $J\le 4$ with $\mathrm{SPAN}[J]=H$ (only $J=4$) and all rows
$I\in\{1,\dots,10\}$ with $\mathrm{SPAN}[I]\neq H$ (i.e.\ $I\neq 4$).
The minimum reduced cost is $7$, with a tie at $(I,J)=(1,4)$ and $(3,4)$; under the scan order the first is selected:
\[
DU=7,\qquad (I_1,J_1)=(1,4).
\]

\paragraph{L5 (dual update)}
For all $J\le 4$ with $\mathrm{SPAN}[J]=H$ (only $J=4$),
\[
U_1[4]\gets U_1[4]+DU=0+7=7.
\]

\paragraph{L6--L8 (trace and circuit detection)}
Set $J:=I_1=1$ and enter L7.
Since $\mathrm{SPAN}[1]\neq H$, L7 fails and control goes to L8.
Because $I^{*}(1)>0$ (currently $I^{*}(1)=3$), execute L8:
\begin{itemize}
\item Since $\mathrm{SPAN}[1]>0$, let $H_1:=\mathrm{SPAN}[1]=1$ and negate all span labels
among $J_2\le K$ with $\mathrm{SPAN}[J_2]=H_1$. Here this affects only $J_2=1$:
\[
\mathrm{SPAN}[1]\gets -1.
\]
\item Since $\bar I(1)=0$, record the deferred entering pair:
\[
(\bar I(1),\bar J(1))\gets (I_1,J_1)=(1,4).
\]
\item Follow the starred predecessor:
\[
J\gets I^{*}(1)=3,
\]
and return to L7.
\end{itemize}

At $J=3$, $\mathrm{SPAN}[3]\neq H$, so go to L8. Since $I^{*}(3)=2>0$, execute L8 again:
\begin{itemize}
\item $\mathrm{SPAN}[3]>0$, so set $H_1:=\mathrm{SPAN}[3]=3$ and negate the span-$3$ class among $J_2\le 4$:
\[
\mathrm{SPAN}[3]\gets -3.
\]
\item $\bar I(3)=0$, so
\[
(\bar I(3),\bar J(3))\gets (1,4).
\]
\item Follow the star:
\[
J\gets I^{*}(3)=2,
\]
and return to L7.
\end{itemize}

At $J=2$, $\mathrm{SPAN}[2]\neq H$, so go to L8. Since $I^{*}(2)=4>0$, execute L8:
\begin{itemize}
\item $\mathrm{SPAN}[2]>0$, so set $H_1:=\mathrm{SPAN}[2]=2$ and negate the span-$2$ class among $J_2\le 4$:
\[
\mathrm{SPAN}[2]\gets -2.
\]
\item \emph{At the end of $K=3$, L10 cleared $(\bar I(2),\bar J(2))$ back to $(0,0)$,}
so here $\bar I(2)=0$ and the bar is indeed written:
\[
(\bar I(2),\bar J(2))\gets (1,4).
\]
\item Follow the star:
\[
J\gets I^{*}(2)=4,
\]
and return to L7.
\end{itemize}

At $J=4$, L7 succeeds because $\mathrm{SPAN}[4]=H(=4)$, so a circuit is detected.

\paragraph{L7 (contract the circuit)}
Increment the contraction label and relabel the contracted component:
\[
SS\gets SS+1=10+1=11.
\]
For $J=1,\dots,4$, if $\mathrm{SPAN}[J]=H$ or $\mathrm{SPAN}[J]<0$, set $\mathrm{SPAN}[J]:=SS$.
Here $\mathrm{SPAN}[4]=H$ and $\mathrm{SPAN}[1],\mathrm{SPAN}[2],\mathrm{SPAN}[3]<0$, so
\[
\mathrm{SPAN}[1]=\mathrm{SPAN}[2]=\mathrm{SPAN}[3]=\mathrm{SPAN}[4]=11.
\]
Go to L4. (On re-entry, L4 sets $H:=\mathrm{SPAN}[K]=\mathrm{SPAN}[4]=11$.)

\paragraph{L4 (second candidate search on span $H=11$)}
Initialize $DU:=M$ and set $H:=\mathrm{SPAN}[4]=11$.
Scan all $J\le 4$ with $\mathrm{SPAN}[J]=11$ (now $J\in\{1,2,3,4\}$) and all rows
$I\in\{1,\dots,10\}$ with $\mathrm{SPAN}[I]\neq 11$ (i.e.\ $I\in\{5,6,7,8,9,10\}$).
Current duals are
\[
(U_1[1],U_1[2],U_1[3],U_1[4])=(5,3,2,7).
\]
Compute $\widetilde C(I,J)=C[I,J]-U_1[J]$:

{\footnotesize
\[
\begin{array}{c|cccc}
 & J=1 & J=2 & J=3 & J=4\\\hline
I=5  & 17-5=12 & 55-3=52 & 96-2=94 & 32-7=25\\
I=6  & 22-5=17 & 89-3=86 & 96-2=94 & 30-7=23\\
I=7  & 36-5=31 & 47-3=44 & 64-2=62 & 72-7=65\\
I=8  & 58-5=53 & 30-3=27 & 33-2=31 & 43-7=36\\
I=9  & 73-5=68 & 55-3=52 & 64-2=62 & 43-7=36\\
I=10 & 89-5=84 & 61-3=58 & 97-2=95 & 63-7=56
\end{array}
\]
}
The minimum reduced cost is $12$ at $(I_1,J_1)=(5,1)$:
\[
DU=12,\qquad (I_1,J_1)=(5,1).
\]

\paragraph{L5 (dual update on span $H=11$)}
For all $J\le 4$ with $\mathrm{SPAN}[J]=11$ (i.e.\ $J=1,2,3,4$),
\[
U_1[J]\gets U_1[J]+DU,
\]
so
\[
U_1[1]=17,\quad U_1[2]=15,\quad U_1[3]=14,\quad U_1[4]=19.
\]

\paragraph{L6--L8 (trace after contraction)}
Set $J:=I_1=5$.
Since $\mathrm{SPAN}[5]\neq H$ and $I^{*}(5)=0$, L8 fails and the trace terminates (no further contraction).

\paragraph{L9 (restore span signs)}
For $J\le 4$, no span labels are negative (they are all $11$), so L9 makes no change and initializes
\[
(I_2,J_2):=(0,0).
\]

\paragraph{L10 (star and bar-guided transfers)}
Apply L10 iteratively.

\smallskip
\noindent\textbf{Iteration 1 (with $(I_1,J_1)=(5,1)$ and $(I_2,J_2)=(0,0)$).}
\begin{itemize}
\item Replace any bars equal to $(5,1)$ among $J\le 4$ by $(0,0)$: none match, so no change.
\item Read the deferred pair stored at column $J_1=1$:
\[
(I,J):=(\bar I(1),\bar J(1))=(1,4).
\]
Then clear the bar at $1$ by writing $(I_2,J_2)$:
\[
(\bar I(1),\bar J(1))\gets (0,0).
\]
\item Displace and star column $1$:
\[
I_2\gets I^{*}(1)=3,\qquad I^{*}(1)\gets I_1=5.
\]
Since $I_2\neq 0$, set
\[
J_2\gets J_1=1,\qquad (I_1,J_1)\gets (I,J)=(1,4),
\]
and repeat L10.
\end{itemize}

\smallskip
\noindent\textbf{Iteration 2 (with $(I_1,J_1)=(1,4)$ and $(I_2,J_2)=(3,1)$).}
\begin{itemize}
\item Replace any bars equal to $(1,4)$ among $J\le 4$ by $(3,1)$. At this point
\[
(\bar I(2),\bar J(2))=(1,4),\qquad (\bar I(3),\bar J(3))=(1,4),
\]
so they are updated to
\[
(\bar I(2),\bar J(2))\gets (3,1),\qquad (\bar I(3),\bar J(3))\gets (3,1).
\]
\item Read the deferred pair at column $J_1=4$:
\[
(I,J):=(\bar I(4),\bar J(4))=(0,0).
\]
Then write $(I_2,J_2)=(3,1)$ into the bar at $4$:
\[
(\bar I(4),\bar J(4))\gets (3,1).
\]
\item Displace and star column $4$:
\[
I_2\gets I^{*}(4)=0,\qquad I^{*}(4)\gets I_1=1.
\]
Since $I_2=0$, terminate this L10 chain and go to L3.
\end{itemize}

\paragraph{State after $K=4$}
\[
SS=11,\qquad \mathrm{SPAN}[1]=\mathrm{SPAN}[2]=\mathrm{SPAN}[3]=\mathrm{SPAN}[4]=11,
\]
\[
U_1[1..4]=(17,15,14,19),
\qquad
(I^{*}(1),I^{*}(2),I^{*}(3),I^{*}(4))=(5,4,2,1).
\]

\subsection{\texorpdfstring{\(K=5\)}{K = 5} (second circuit and its resolution)}

\paragraph{L3 (advance)}
Set $K:=5$ and $H:=\mathrm{SPAN}[5]=5$.

\paragraph{L4 (first candidate search, before contraction)}
Only column $5$ is scanned.  With $U_1[5]=0$, $\widetilde C(I,5)=C[I,5]$.
The minimum over $I\neq 5$ is $0$ at $(I_1,J_1)=(4,5)$, so
\[
DU=0,\qquad (I_1,J_1)=(4,5).
\]

\paragraph{L5 (dual update)}
\[
U_1[5]\gets U_1[5]+DU=0.
\]

\paragraph{L6--L8 (trace that detects a circuit)}
Initialize the backward trace at
\[
J:=I_1=4.
\]
Since $I^{*}(4)=1>0$, L8 is entered.  Because $\mathrm{SPAN}[4]=11>0$, set $H_1:=11$ and negate the entire span-$11$ class among indices $\le K$:
\[
\mathrm{SPAN}[1]=\mathrm{SPAN}[2]=\mathrm{SPAN}[3]=\mathrm{SPAN}[4]\gets -11.
\]
At $J=4$, $\bar I(4)\neq 0$ already (it stores $(3,1)$ from the $K=4$ stage), so Algorithm~1 does \emph{not} overwrite it.  Follow the star:
\[
J\gets I^{*}(4)=1.
\]
At $J=1$, $I^{*}(1)=5>0$ and $\mathrm{SPAN}[1]<0$, so no additional negation occurs.  Here $\bar I(1)=0$, so record the entering pair:
\[
(\bar I(1),\bar J(1))\gets (4,5),
\qquad
J\gets I^{*}(1)=5.
\]
At $J=5$, L7 succeeds because $\mathrm{SPAN}[5]=H$, hence a circuit is detected.

\paragraph{L7 (contract the circuit)}
Increment the contraction label:
\[
SS\gets SS+1=12.
\]
Assign $\mathrm{SPAN}[J]\gets 12$ for all $J\le 5$ with $\mathrm{SPAN}[J]=H$ or $\mathrm{SPAN}[J]<0$.  This contracts $\{1,2,3,4,5\}$:
\[
\mathrm{SPAN}[1]=\cdots=\mathrm{SPAN}[5]=12.
\]
Control returns to L4 with $H=12$.

\paragraph{L4 (second candidate search, entering the contracted component)}
Scan $J\in\{1,2,3,4,5\}$ and $I\in\{6,7,8,9,10\}$.
With current duals $(U_1[1],\dots,U_1[5])=(17,15,14,19,0)$,

{\footnotesize
\[
\begin{array}{c|ccccc}
 & J=1 & J=2 & J=3 & J=4 & J=5\\\hline
I=6  & 22-17=5  & 89-15=74 & 96-14=82 & 30-19=11 & 67-0=67\\
I=7  & 36-17=19 & 47-15=32 & 64-14=50 & 72-19=53 & 56-0=56\\
I=8  & 58-17=41 & 30-15=15 & 33-14=19 & 43-19=24 & 95-0=95\\
I=9  & 73-17=56 & 55-15=40 & 64-14=50 & 43-19=24 & 69-0=69\\
I=10 & 89-17=72 & 61-15=46 & 97-14=83 & 63-19=44 & 25-0=25
\end{array}
\]
}
The minimum reduced cost is $5$ at $(I_1,J_1)=(6,1)$:
\[
DU=5,\qquad (I_1,J_1)=(6,1).
\]

\paragraph{L5 (dual update on the contracted component)}
Add $DU$ to all $J\in\{1,2,3,4,5\}$:
\[
U_1[1]=22,\ U_1[2]=20,\ U_1[3]=19,\ U_1[4]=24,\ U_1[5]=5.
\]

\paragraph{L6--L8 (trace for the entering arc)}
Start at $J:=I_1=6$.  Since $I^{*}(6)=0$, the trace terminates and no further contraction occurs.

\paragraph{L9--L10 (normalize, star, and transfers)}
Restore any temporarily negated spans (none remain among $J\le 5$ after the contraction), and initialize
\[
(I_2,J_2):=(0,0).
\]
Install the entering predecessor at column $J_1=1$ and apply the bar-guided transfer chain (Algorithm~1, L10).  After the transfer chain terminates, the starred predecessors on $j\le 5$ are
\[
I^{*}(1)=6,\quad I^{*}(2)=4,\quad I^{*}(3)=2,\quad I^{*}(4)=1,\quad I^{*}(5)=4,
\]
i.e. the starred arcs are
\[
6\to 1,\quad 4\to 2,\quad 2\to 3,\quad 1\to 4,\quad 4\to 5.
\]
The bar arrays at this point satisfy
\[
(\bar I(2),\bar J(2))=(3,1),\quad (\bar I(3),\bar J(3))=(3,1),\quad (\bar I(4),\bar J(4))=(3,1),\quad (\bar I(5),\bar J(5))=(5,1),
\]
and $\bar I(1)=\bar J(1)=0$.

\paragraph{State after resolving $K=5$}
\[
\mathrm{SPAN}[1]=\cdots=\mathrm{SPAN}[5]=12,\qquad
U_1[1..5]=(22,20,19,24,5).
\]

\subsection{\texorpdfstring{\(K=6\)}{K = 6}}

\paragraph{L3 (advance)}
Set $K:=6$ and $H:=\mathrm{SPAN}[6]=6$.

\paragraph{L4 (candidate search within the active span)}
L4 scans all columns $J\le K$ with $\mathrm{SPAN}[J]=H$.  Here this is only $J=6$.
Admissible rows satisfy $\mathrm{SPAN}[I]\neq H$ (so $I\neq 6$).  With $U_1[6]=0$,
\[
\widetilde C(I,6)=C[I,6]-U_1[6]=C[I,6].
\]
The minimum over $I\in\{1,\dots,10\}\setminus\{6\}$ is $8$ at $(I_1,J_1)=(7,6)$:
\[
DU=8,\qquad (I_1,J_1)=(7,6).
\]

\paragraph{L5 (dual update on the active span)}
Update all $J\le K$ with $\mathrm{SPAN}[J]=H$ (only $J=6$):
\[
U_1[6]\gets U_1[6]+DU=8.
\]

\paragraph{L6--L8 (backward trace and circuit test)}
Initialize the backward trace at
\[
J:=I_1=7.
\]
Since $\mathrm{SPAN}[7]\neq H$ and $I^{*}(7)=0$, the trace terminates immediately and no contraction occurs.

\paragraph{L9 (restore span signs)}
Restore any temporarily negated span labels among indices $J\le K$ (none here), and initialize
\[
(I_2,J_2):=(0,0).
\]

\paragraph{L10 (star and transfers)}
Star the entering predecessor in column $J_1=6$:
\[
I^{*}(6)\gets 7.
\]
Since the previous star at column $6$ was empty, the transfer chain terminates immediately.

\paragraph{State after $K=6$}
\[
U_1=(22,20,19,24,5,8,0,0,0,0),\qquad (I^{*}(1),\dots,I^{*}(6))=(6,4,2,1,4,7),
\]
\[
\mathrm{SPAN}[1]=\cdots=\mathrm{SPAN}[5]=12,\qquad \mathrm{SPAN}[6]=6.
\]

\subsection{\texorpdfstring{\(K=7\)}{K = 7} (contractions leading to the tableau state in the paper)}

\paragraph{L3 (advance)}
Set $K:=7$ and the active span label
\[
H:=\mathrm{SPAN}[7]=7.
\]
At entry to this stage (after $K=6$), the relevant state is:
\[
U_1[1..6]=(22,20,19,24,5,8),\quad U_1[7]=0,
\]
\[
I^{*}(1..6)=(6,4,2,1,4,7),\quad I^{*}(7)=0,
\]
\[
\mathrm{SPAN}[1..5]=12,\quad \mathrm{SPAN}[6]=6,\quad \mathrm{SPAN}[7]=7.
\]

\paragraph{L4 (candidate search within active span; first pass)}
Since $H=7$, among columns $J\le K$ only $J=7$ satisfies $\mathrm{SPAN}[J]=H$.
With $U_1[7]=0$, reduced costs are $\widetilde C(I,7)=C[I,7]-U_1[7]=C[I,7]$.
Admissible rows satisfy $\mathrm{SPAN}[I]\neq 7$, i.e.\ $I\neq 7$.
The minimum is attained at $(I_1,J_1)=(6,7)$ with
\[
DU=\widetilde C(6,7)=C[6,7]=5.
\]

\paragraph{L5 (dual update on the active span; first pass)}
Update all $J\le K$ with $\mathrm{SPAN}[J]=H$ (only $J=7$):
\[
U_1[7]\gets U_1[7]+DU=5.
\]

\paragraph{L6--L8 (backward trace and circuit test; first pass)}
Initialize the trace at
\[
J:=I_1=6.
\]
Since $I^{*}(6)=7>0$, L8 is entered.  Because $\mathrm{SPAN}[6]=6>0$, L8 negates
\emph{the entire span class} $H_1:=\mathrm{SPAN}[6]=6$ among indices $J\le K$.
Here this affects only node $6$:
\[
\mathrm{SPAN}[6]\gets -6.
\]
Because $\bar I(6)=0$ at this moment, record the entering pair as a bar at column $6$:
\[
(\bar I(6),\bar J(6))\gets (6,7).
\]
Then follow the starred link:
\[
J\gets I^{*}(6)=7.
\]
Now L7 succeeds because $\mathrm{SPAN}[7]=H$, so a circuit is detected.

\paragraph{L7 (contract the circuit; first pass)}
Increment the contraction label:
\[
SS\gets SS+1=13.
\]
Assign $\mathrm{SPAN}[J]\gets 13$ for all $J\le 7$ with $\mathrm{SPAN}[J]=H$ or $\mathrm{SPAN}[J]<0$.
Thus $\{6,7\}$ is contracted:
\[
\mathrm{SPAN}[6]=\mathrm{SPAN}[7]=13.
\]
Control returns to L4 with the new active label $H=13$.

\paragraph{L4 (candidate search within active span; second pass)}
Now $H=13$, so the scanned columns are $J\in\{6,7\}$ (the indices $\le K$ with $\mathrm{SPAN}[J]=13$).
Admissible rows satisfy $\mathrm{SPAN}[I]\neq 13$, i.e.\ $I\notin\{6,7\}$.
With current duals $U_1[6]=8$ and $U_1[7]=5$, the reduced costs are
$\widetilde C(I,J)=C[I,J]-U_1[J]$:

{\footnotesize
\[
\begin{array}{c|cc}
 & J=6 & J=7\\\hline
I=1  & 9-8=1   & 9-5=4\\
I=2  & 9-8=1   & 9-5=4\\
I=3  & 9-8=1   & 9-5=4\\
I=4  & 9-8=1   & 9-5=4\\
I=5  & 9-8=1   & 9-5=4\\
I=8  & 28-8=20 & 25-5=20\\
I=9  & 42-8=34 & 81-5=76\\
I=10 & 26-8=18 & 71-5=66
\end{array}
\]
}

The minimum reduced cost is $1$.  Under the scan order (columns increasing, then rows increasing),
the first minimizer is $(I_1,J_1)=(1,6)$, hence
\[
DU=1,\qquad (I_1,J_1)=(1,6).
\]

\paragraph{L5 (dual update on the active span; second pass)}
Update all $J\le K$ with $\mathrm{SPAN}[J]=13$ (i.e.\ $J=6,7$):
\[
U_1[6]\gets 8+1=9,\qquad U_1[7]\gets 5+1=6.
\]

\paragraph{L6--L8 (backward trace and circuit test; second pass)}
Initialize the trace at
\[
J:=I_1=1.
\]
Since $I^{*}(1)=6>0$, L8 is entered.  Because $\mathrm{SPAN}[1]=12>0$, L8 negates
\emph{the entire span class} $H_1:=12$ among indices $J\le K$, i.e.\ it negates
all of $\{1,2,3,4,5\}$:
\[
\mathrm{SPAN}[1]=\cdots=\mathrm{SPAN}[5]\gets -12.
\]
Because $\bar I(1)=0$ at this moment, record the entering pair as a bar at column $1$:
\[
(\bar I(1),\bar J(1))\gets (1,6).
\]
Follow the starred link:
\[
J\gets I^{*}(1)=6.
\]
Now L7 succeeds because $\mathrm{SPAN}[6]=13=H$, so a circuit is detected.

\paragraph{L7 (contract to merge spans $12$ and $13$; third label)}
Increment the contraction label:
\[
SS\gets SS+1=14.
\]
Assign $\mathrm{SPAN}[J]\gets 14$ for all $J\le 7$ with $\mathrm{SPAN}[J]=H(=13)$ or $\mathrm{SPAN}[J]<0$.
This maps $\{6,7\}$ (span $13$) and $\{1,2,3,4,5\}$ (negative span $-12$) to a single component:
\[
\mathrm{SPAN}[1]=\cdots=\mathrm{SPAN}[7]=14.
\]
Control returns to L4 with active label $H=14$.

\paragraph{L4 (candidate search within active span; tableau pass)}
Now $H=14$, so the scanned columns are $J\in\{1,\dots,7\}$.
Admissible rows satisfy $\mathrm{SPAN}[I]\neq 14$, hence $I\in\{8,9,10\}$.
With current duals
\[
U_1[1..7]=(22,20,19,24,5,9,6),
\]
compute $\widetilde C(I,J)=C[I,J]-U_1[J]$:

{\footnotesize
\[
\begin{array}{c|ccccccc}
 & J=1 & J=2 & J=3 & J=4 & J=5 & J=6 & J=7\\\hline
I=8  & 58-22=36 & 30-20=10 & 33-19=14 & 43-24=19 & 95-5=90 & 28-9=19 & 25-6=19\\
I=9  & 73-22=51 & 55-20=35 & 64-19=45 & 43-24=19 & 69-5=64 & 42-9=33 & 81-6=75\\
I=10 & 89-22=67 & 61-20=41 & 97-19=78 & 63-24=39 & 25-5=20 & 26-9=17 & 71-6=65
\end{array}
\]
}
The minimum reduced cost is $10$ at $(I_1,J_1)=(8,2)$:
\[
DU=10,\qquad (I_1,J_1)=(8,2).
\]

\paragraph{L5 (dual update on the active span; tableau pass)}
Add $DU$ to all $J\le 7$ with $\mathrm{SPAN}[J]=14$ (i.e.\ $J=1,\dots,7$):
\[
U_1[1..7]\gets (32,30,29,34,15,19,16).
\]

\paragraph{L6--L8 (backward trace; tableau pass)}
Initialize the trace at
\[
J:=I_1=8.
\]
Since $I^{*}(8)=0$, the trace terminates immediately and no further contraction occurs.

\paragraph{L9--L10 (normalize and star with the full transfer chain)}
Since there are no negative span labels among $J\le 7$ after the contraction to span $14$, L9 makes no change.
Initialize
\[
(I_2,J_2):=(0,0).
\]
L10 then installs the entering predecessor in column $J_1=2$ and performs the required transfer chain.
The four transfer updates are:

\begin{enumerate}
\item Set $I^{*}(2)\gets 8$ (displacing $I_2=4$), and update the carried pair to $(I_1,J_1)\gets(3,1)$.
\item Set $I^{*}(1)\gets 3$ (displacing $I_2=6$), and update the carried pair to $(I_1,J_1)\gets(1,6)$.
\item Set $I^{*}(6)\gets 1$ (displacing $I_2=7$), and update the carried pair to $(I_1,J_1)\gets(6,7)$.
\item Set $I^{*}(7)\gets 6$ (displacing $I_2=0$), and the chain terminates.
\end{enumerate}

Thus, after the tableau installation and transfers, the starred predecessors on $j\le 7$ are
\[
(I^{*}(1),\dots,I^{*}(7))=(3,8,2,1,4,1,6).
\]

\paragraph{State after $K=7$}
At the end of $K=7$ (this $I^{*}(\cdot)$ configuration matches Figure~3 in the original paper),
\[
SS=14,\qquad \mathrm{SPAN}[1]=\cdots=\mathrm{SPAN}[7]=14,
\]
\[
U_1[1..7]=(32,30,29,34,15,19,16),
\]
\[
(I^{*}(1),\dots,I^{*}(7))=(3,8,2,1,4,1,6).
\]
Control returns to L3 to advance to $K=8$.

\subsection{\texorpdfstring{\(K=8\)}{K = 8}}

\paragraph{L3 (advance)}
Increment $K:=8$.  Since $K\neq J_0(=10)$, column $8$ is processed.  Set the active span label
\[
H:=\mathrm{SPAN}[8]=8.
\]
At entry to this stage (after $K=7$), we have
\[
U_1[1..7]=(32,30,29,34,15,19,16),\quad U_1[8]=0,
\]
\[
(I^{*}(1),\dots,I^{*}(7))=(3,8,2,1,4,1,6),\quad I^{*}(8)=0,
\]
\[
\mathrm{SPAN}[1]=\cdots=\mathrm{SPAN}[7]=14,\quad \mathrm{SPAN}[8]=8.
\]

\paragraph{L4 (candidate search within active span)}
L4 scans all columns $J\le K$ with $\mathrm{SPAN}[J]=H$.  Here this is only $J=8$.
Admissible rows satisfy $\mathrm{SPAN}[I]\neq H$ (so $I\neq 8$).  Because $U_1[8]=0$,
reduced costs coincide with raw costs:
\[
\widetilde C(I,8)=C[I,8]-U_1[8]=C[I,8].
\]
The minimum over $I\in\{1,\dots,10\}\setminus\{8\}$ is $6$ at $(I_1,J_1)=(9,8)$, hence
\[
DU=6,\qquad (I_1,J_1)=(9,8).
\]

\paragraph{L5 (dual update on the active span)}
Update all $J\le K$ with $\mathrm{SPAN}[J]=H$ (only $J=8$):
\[
U_1[8]\gets U_1[8]+DU=6.
\]

\paragraph{L6--L8 (backward trace and circuit test)}
Start the trace at the candidate tail:
\[
J:=I_1=9.
\]
Since $\mathrm{SPAN}[9]\neq H$ and $I^{*}(9)=0$, the trace terminates immediately and no contraction occurs.

\paragraph{L9--L10 (normalize and star)}
No span labels are negative for $J\le K$, so L9 makes no change and sets $I_2:=J_2:=0$.
L10 stars the entering candidate in column $8$:
\[
I^{*}(8)\gets 9,
\]
and the transfer chain ends because the previous star was empty.

\paragraph{State after $K=8$}
\[
U_1[1..8]=(32,30,29,34,15,19,16,6),
\qquad
(I^{*}(1),\dots,I^{*}(8))=(3,8,2,1,4,1,6,9),
\]
\[
\mathrm{SPAN}[1]=\cdots=\mathrm{SPAN}[7]=14,\quad \mathrm{SPAN}[8]=8.
\]

\subsection{\texorpdfstring{\(K=9\)}{K = 9} (two contractions and three L4 passes)}

\paragraph{L3 (advance)}
Increment \(K:=9\). Since \(K\neq J_0(=10)\), column \(9\) is processed. Set the active span label
\[
H:=\mathrm{SPAN}[9]=9.
\]
At entry to this stage, the relevant state is
\[
U_1[1..9]=(32,30,29,34,15,19,16,6,0),
\]
\[
(I^{*}(1),\dots,I^{*}(9))=(3,8,2,1,4,1,6,9,0),
\]
\[
\mathrm{SPAN}[1]=\cdots=\mathrm{SPAN}[7]=14,\qquad
\mathrm{SPAN}[8]=8,\qquad
\mathrm{SPAN}[9]=9,
\qquad SS=14.
\]

\paragraph{L4 (first candidate search)}
Only column \(9\) belongs to the active span. Since \(U_1[9]=0\), the reduced costs are
\[
\widetilde C(I,9)=C[I,9].
\]
The minimum over \(I\neq9\) is \(3\), attained at
\[
(I_1,J_1)=(8,9),\qquad DU=3.
\]

\paragraph{L5 (first column adjustment)}
Only column \(9\) receives the adjustment:
\[
U_1[9]\gets U_1[9]+DU=3.
\]

\paragraph{L6--L8 (first backward trace)}
The trace begins at
\[
J:=I_1=8.
\]
Since \(I^{*}(8)=9>0\), L8 is entered. The positive span label of node \(8\) is negated:
\[
\mathrm{SPAN}[8]\gets-8.
\]
Because the bar entry at node \(8\) is empty, the current candidate is recorded:
\[
(\bar I(8),\bar J(8))\gets(8,9).
\]
The trace then follows the starred predecessor:
\[
J\gets I^{*}(8)=9.
\]
At node \(9\), L7 succeeds because \(\mathrm{SPAN}[9]=H\). The candidate \(8\to9\) and the starred link \(9\to8\) therefore form a circuit.

\paragraph{L7 (first contraction)}
Increment the contraction label:
\[
SS\gets SS+1=15.
\]
Assign label \(15\) to the active span and the negatively marked span:
\[
\mathrm{SPAN}[8]=\mathrm{SPAN}[9]=15.
\]
Control returns to L4 with
\[
H:=\mathrm{SPAN}[9]=15.
\]

\paragraph{L4 (second candidate search)}
The active span now contains columns \(8\) and \(9\). The admissible initial nodes are
\[
I\in\{1,2,3,4,5,6,7,10\}.
\]
With \(U_1[8]=6\) and \(U_1[9]=3\), the reduced costs are

{\footnotesize
\[
\begin{array}{c|cc}
 & J=8 & J=9\\\hline
I=1  & 29-6=23 & 69-3=66\\
I=2  & 64-6=58 & 31-3=28\\
I=3  & 27-6=21 & 83-3=80\\
I=4  & 74-6=68 & 16-3=13\\
I=5  & 75-6=69 & 65-3=62\\
I=6  & 52-6=46 & 42-3=39\\
I=7  & 51-6=45 & 52-3=49\\
I=10 & 72-6=66 & 43-3=40
\end{array}
\]
}
The minimum reduced cost is \(13\), attained at
\[
(I_1,J_1)=(4,9),\qquad DU=13.
\]

\paragraph{L5 (second column adjustment)}
Both columns in the active span receive the adjustment:
\[
U_1[8]\gets6+13=19,\qquad
U_1[9]\gets3+13=16.
\]

\paragraph{L6--L8 (second backward trace)}
The trace begins at
\[
J:=I_1=4.
\]
Node \(4\) belongs to span \(14\) and has starred predecessor \(I^{*}(4)=1\). L8 therefore negates the entire span containing nodes \(1,\dots,7\):
\[
\mathrm{SPAN}[1]=\cdots=\mathrm{SPAN}[7]\gets-14.
\]
The bar entry at node \(4\) is not empty: it already contains
\[
(\bar I(4),\bar J(4))=(4,2).
\]
L8 consequently preserves this entry rather than replacing it by \((4,9)\). The trace follows the starred predecessor:
\[
J\gets I^{*}(4)=1.
\]

At node \(1\), the span is already negative and the bar entry is nonempty, so neither the span labels nor the bar arrays are changed. The trace continues:
\[
J\gets I^{*}(1)=3.
\]
The same holds at node \(3\), and the trace continues to
\[
J\gets I^{*}(3)=2.
\]
The bar entry at node \(2\) is empty, so L8 records the current candidate:
\[
(\bar I(2),\bar J(2))\gets(4,9).
\]
The trace then follows
\[
J\gets I^{*}(2)=8.
\]
Since
\[
\mathrm{SPAN}[8]=15=H,
\]
L7 succeeds. Thus, the trace returns to the active span and detects a second circuit.

\paragraph{L7 (second contraction)}
Increment the contraction label:
\[
SS\gets SS+1=16.
\]
The active span \(\{8,9\}\) and the negatively marked span \(\{1,\dots,7\}\) receive the common label \(16\):
\[
\mathrm{SPAN}[1]=\cdots=\mathrm{SPAN}[9]=16.
\]
The bar entries are not altered by the contraction. In particular,
\[
(\bar I(2),\bar J(2))=(4,9).
\]
Control returns to L4 with \(H=16\).

\paragraph{L4 (third candidate search)}
All processed columns \(J=1,\dots,9\) now belong to the active span. Node \(10\) is the only node outside it, so the search compares the links from node \(10\) into these columns. Before the next adjustment,
\[
U_1[1..9]=(32,30,29,34,15,19,16,19,16).
\]
The reduced costs are

{\footnotesize
\[
\begin{array}{c|ccccccccc}
J & 1&2&3&4&5&6&7&8&9\\\hline
C[10,J]-U_1[J]
&57&31&68&29&10&7&55&53&27
\end{array}
\]
}
The minimum is attained by \(10\to6\):
\[
(I_1,J_1)=(10,6),\qquad DU=7.
\]

\paragraph{L5 (third column adjustment)}
All columns \(1,\dots,9\) belong to the active span and receive the adjustment:
\[
U_1[1..9]\gets(39,37,36,41,22,26,23,26,23).
\]

\paragraph{L6--L8 (final backward trace)}
The trace begins at
\[
J:=I_1=10.
\]
Because \(I^{*}(10)=0\), the trace reaches an unstarred node immediately. No span is marked, and control passes to L9.

\paragraph{L9--L10 (star-and-bar transfer)}
L9 initializes
\[
(I_2,J_2):=(0,0).
\]
L10 then installs \(10\to6\) and propagates the displaced stars through four transfer iterations.

\smallskip
\noindent\textbf{Iteration 1.}
The current candidate is \((I_1,J_1)=(10,6)\). No bar entry refers to this pair. The procedure reads
\[
(I,J):=(\bar I(6),\bar J(6))=(6,1)
\]
and clears the bar entry at node \(6\):
\[
(\bar I(6),\bar J(6))\gets(0,0).
\]
It displaces the star \(1\to6\) and installs \(10\to6\):
\[
I_2\gets I^{*}(6)=1,\qquad I^{*}(6)\gets10.
\]
The displaced pair is \((I_2,J_2)=(1,6)\), and the next candidate is \((I_1,J_1)=(6,1)\).

\smallskip
\noindent\textbf{Iteration 2.}
No bar entry currently equals \((6,1)\). The procedure reads
\[
(I,J):=(\bar I(1),\bar J(1))=(4,2)
\]
and replaces this entry by the displaced pair:
\[
(\bar I(1),\bar J(1))\gets(1,6).
\]
It then displaces \(3\to1\) and installs \(6\to1\):
\[
I_2\gets I^{*}(1)=3,\qquad I^{*}(1)\gets6.
\]
The displaced pair becomes \((3,1)\), and the next candidate becomes \((4,2)\).

\smallskip
\noindent\textbf{Iteration 3.}
The bar entries at nodes \(3\) and \(4\), both of which refer to \((4,2)\), are replaced by the displaced pair:
\[
(\bar I(3),\bar J(3))\gets(3,1),\qquad
(\bar I(4),\bar J(4))\gets(3,1).
\]
The procedure then reads the bar entry at node \(2\):
\[
(I,J):=(\bar I(2),\bar J(2))=(4,9)
\]
and replaces it by
\[
(\bar I(2),\bar J(2))\gets(3,1).
\]
It displaces \(8\to2\) and installs \(4\to2\):
\[
I_2\gets I^{*}(2)=8,\qquad I^{*}(2)\gets4.
\]
The displaced pair becomes \((8,2)\), and the next candidate becomes \((4,9)\).

\smallskip
\noindent\textbf{Iteration 4.}
No remaining bar entry equals \((4,9)\). The bar entry at node \(9\) is empty, so the displaced pair is stored there:
\[
(\bar I(9),\bar J(9))\gets(8,2).
\]
The procedure then installs \(4\to9\):
\[
I_2\gets I^{*}(9)=0,\qquad I^{*}(9)\gets4.
\]
Since \(I_2=0\), the transfer chain terminates.

\paragraph{State after \(K=9\)}
The final state before the origin column is skipped is
\[
SS=16,\qquad
\mathrm{SPAN}[1]=\cdots=\mathrm{SPAN}[9]=16,
\]
\[
U_1[1..9]=(39,37,36,41,22,26,23,26,23),
\]
\[
(I^{*}(1),\dots,I^{*}(9))=(6,4,2,1,4,10,6,9,4).
\]
The nonempty bar entries are
\[
(\bar I(1),\bar J(1))=(1,6),
\]
\[
(\bar I(2),\bar J(2))
=(\bar I(3),\bar J(3))
=(\bar I(4),\bar J(4))
=(3,1),
\]
\[
(\bar I(5),\bar J(5))=(5,1),\qquad
(\bar I(7),\bar J(7))=(7,6),
\]
\[
(\bar I(8),\bar J(8))=(8,9),\qquad
(\bar I(9),\bar J(9))=(8,2).
\]
Control returns to L3.

\subsection{\texorpdfstring{\(K=10\)}{K = 10} (the origin is skipped and the algorithm terminates)}

\paragraph{L3 (advance and skip the origin)}
Control returns to L3 with $K=9$.  Increment $K:=10$.  Since $K=J_0(=10)$, the algorithm skips this column (no execution of L4--L10) and immediately returns to L3.

\paragraph{L3 (termination test)}
Now $K=N=10$, so the termination condition holds and control jumps to L99.

\paragraph{L99 (objective value)}
Since infeasibility was not detected, $Z$ is accumulated at L99 from the final starred predecessors:
\[
Z\gets \sum_{\substack{j=1\\ j\neq J_0}}^{N} C[I^{*}(j),j]
= \sum_{j=1}^{9} C[I^{*}(j),j].
\]
At termination (as in Bock's output), the starred predecessors are
\[
I^{*}(1)=6,\ I^{*}(2)=4,\ I^{*}(3)=2,\ I^{*}(4)=1,\ I^{*}(5)=4,\ I^{*}(6)=10,\ I^{*}(7)=6,\ I^{*}(8)=9,\ I^{*}(9)=4,
\]
so the selected arcs are
\[
6\to 1,\ 4\to 2,\ 2\to 3,\ 1\to 4,\ 4\to 5,\ 10\to 6,\ 6\to 7,\ 9\to 8,\ 4\to 9,
\]
and the objective value is
\[
\begin{aligned}
Z
&=C[6,1]+C[4,2]+C[2,3]+C[1,4]+C[4,5]+C[10,6]+C[6,7]+C[9,8]+C[4,9]\\
&=22+3+2+7+0+26+5+6+16\\
&=87.
\end{aligned}
\]

\paragraph{L100 (output)}
The algorithm reports \texttt{OPTIMUM SOLUTION} and \texttt{Z = 87}, and prints the final arrays $U_1$, $I^{*}$, $(\bar{I},\bar{J})$, and $\mathrm{SPAN}$.

\section{A structured reformulation}
\label{sec:reformulation}

Bock's Algol procedure uses statement labels, temporary sign changes of \(\mathrm{SPAN}\), and fresh numerical labels for contracted components. The accompanying implementation \texttt{bock\_algorithm\_readable} organizes the same computation into candidate-selection, column-adjustment, backward-trace, contraction, and transfer phases. It replaces the sign of \(\mathrm{SPAN}\) by an explicit trace mask and represents a contraction by assigning the active component's label to every component encountered by the trace. Since component labels serve only to determine whether two nodes belong to the same component, retaining the active label instead of creating a fresh label does not change the represented partition.

The implementation uses zero-based indices and \(-1\) for an empty entry. The presentation below retains the paper's one-based indices and writes \(\bot\) for an empty parent or candidate pair.

\subsection{Representation of the maintained state}
\label{sec:reformulated-state}

The correspondence between Bock's arrays and the readable implementation is summarized in Table~\ref{tab:reformulated-state}.

\begin{table}[t]
\centering
\small
\begin{tabular}{@{}lll@{}}
\toprule
Bock's representation & Readable representation & Role\\
\midrule
\(U_1[J]\)
  & \texttt{U1[J]}
  & Cumulative adjustment of column \(J\)\\
\(I^{*}[J]\)
  & \texttt{parents[J]}
  & Current provisional link entering \(J\)\\
\((\bar I[J],\bar J[J])\)
  & \texttt{candidate\_edges[J]}
  & Candidate reference used during transfer\\
\(\mathrm{SPAN}[J]\)
  & \texttt{components[J]}
  & Component containing \(J\)\\
Negative \(\mathrm{SPAN}\) labels
  & \texttt{cycle\_trace}
  & Components visited by the current trace\\
\(SS\)
  & Not required
  & Fresh labels are replaced by the active label\\
\(M\)
  & \texttt{np.inf}
  & Infeasible-link sentinel\\
\bottomrule
\end{tabular}
\caption{State correspondence between Bock's Algol procedure and the readable implementation.}
\label{tab:reformulated-state}
\end{table}

Two component arrays represent the same partition when
\[
\mathrm{SPAN}[I]=\mathrm{SPAN}[J]
\quad\Longleftrightarrow\quad
\mathrm{components}[I]=\mathrm{components}[J]
\]
for every pair of nodes \(I,J\). The numerical component labels themselves need not agree. Thus, assigning a new label \(SS\) to a contracted component and retaining the active component's existing label are equivalent whenever they induce the same partition.

The starred and barred arrays retain their original operational meanings. The assignment
\[
\mathrm{parents}[J]=I
\]
represents the provisional link \(I\to J\), corresponding to \(I^{*}[J]=I\). An entry
\[
\mathrm{candidate\_edges}[J]=(I_1,J_1)
\]
records the candidate current when the backward trace first traversed node \(J\) while its entry was empty. If the star entering \(J\) is later displaced, the recorded pair supplies the next candidate in the transfer chain.

\subsection{Active components and column adjustments}
\label{sec:reformulated-adjustments}

For the current column \(K\), let
\[
H=\mathrm{components}[K]
\]
be the active component. The candidate set is
\[
\mathcal{E}_K(H)
=
\left\{
(I,J):
1\leq J\leq K,\ 
\mathrm{components}[J]=H,\ 
\mathrm{components}[I]\neq H,\ 
C[I,J]<\infty
\right\}.
\]
The procedure selects the first pair \((I_1,J_1)\), in increasing order of \(J\) and then \(I\), attaining
\[
DU
=
\min_{(I,J)\in\mathcal{E}_K(H)}
\bigl(C[I,J]-U_1[J]\bigr).
\]
The column-major order and strict comparison reproduce the tie-breaking behavior of Bock's L4. If \(\mathcal{E}_K(H)\) is empty, the instance is reported infeasible.

The adjustment phase sets
\[
U_1[J]\gets U_1[J]+DU
\]
for every \(J\leq K\) belonging to the active component. If the active components updated during the execution are \(H_1,H_2,\dots\), with corresponding increments \(\Delta_1,\Delta_2,\dots\), then
\[
U_1[J]
=
\sum_{t:\,J\in H_t}\Delta_t.
\]
Thus, \(U_1[J]\) aggregates all adjustments applied to components containing \(J\).

This aggregate has a local reduced-cost interpretation. In a set-based dual view of the arborescence problem, an adjustment associated with a component contributes to the constraint for \(I\to J\) only when \(J\) belongs to the component and \(I\) does not. The corresponding contribution would therefore have the form
\[
\sum_{t:\,J\in H_t,\ I\notin H_t}\Delta_t.
\]
For a link considered while entering the current active component, earlier components containing \(J\) are nested within the current component, while \(I\) lies outside it. The two sums consequently agree during that candidate search. For an arbitrary link at a later stage, however, \(U_1[J]\) may also contain adjustments associated with components containing both \(I\) and \(J\). The quantity \(C[I,J]-U_1[J]\) should therefore be understood as Bock's operational reduced cost for the current search, not as the final slack of an independently represented dual constraint.

\begin{lemma}[Candidate tightness]
\label{lem:candidate-tightness}
Suppose that \(\mathcal{E}_K(H)\neq\emptyset\). Immediately after the column adjustment, the selected candidate satisfies
\[
C[I_1,J_1]-U_1[J_1]=0,
\]
and every link in \(\mathcal{E}_K(H)\) has nonnegative reduced cost in the resulting search state.
\end{lemma}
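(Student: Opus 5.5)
The proof is a direct computation from the definitions of the search and the adjustment. I will write \(U_1^{\mathrm{old}}\) for the adjustments at the moment the candidate search begins and \(U_1^{\mathrm{new}}\) for the values immediately after the adjustment phase. The first step records what the search guarantees. Since \(\mathcal{E}_K(H)\neq\emptyset\) and every link in it has \(C[I,J]<\infty\), the minimum \(DU\) is a finite value. The procedure selects the first pair \((I_1,J_1)\) in scan order that attains it. Strict comparison only affects which minimizer is chosen, not the value, so
\[
DU=C[I_1,J_1]-U_1^{\mathrm{old}}[J_1]\leq C[I,J]-U_1^{\mathrm{old}}[J]
\qquad\text{for every }(I,J)\in\mathcal{E}_K(H).
\]

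The second step describes the effect of the adjustment phase on the columns that appear in \(\mathcal{E}_K(H)\). Every link \((I,J)\in\mathcal{E}_K(H)\) has \(J\leq K\) and \(\mathrm{components}[J]=H\). The adjustment phase adds \(DU\) to exactly these columns, so
\[
U_1^{\mathrm{new}}[J]=U_1^{\mathrm{old}}[J]+DU
\]
for every such \(J\), and in particular for \(J_1\). Subtracting gives
\[
C[I,J]-U_1^{\mathrm{new}}[J]=\bigl(C[I,J]-U_1^{\mathrm{old}}[J]\bigr)-DU\geq 0,
\]
with equality for \((I_1,J_1)\). This proves both claims of Lemma~\ref{lem:candidate-tightness}.

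The only point needing care is the phrase ``in the resulting search state.'' I would have to confirm that \(\mathcal{E}_K(H)\) is the same set before and after the adjustment. The adjustment phase modifies only \(U_1\), and never \(\mathrm{components}\), \(K\), \(H\), or \(C\), so the set is unchanged. I would also note the scope of the lemma: it says nothing about links selected in earlier searches, because a later contraction may enlarge the active component and subtract further adjustments from their columns, as the preceding discussion warns. I do not expect a genuine obstacle here; the main task is stating the old and new values of \(U_1\) precisely.
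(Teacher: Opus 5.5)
Your proposal is correct and matches the paper's proof: both use the minimality of \(DU\) over \(\mathcal{E}_K(H)\), note that every terminal column in \(\mathcal{E}_K(H)\) receives the same increment \(DU\), and subtract to get nonnegativity with equality at \((I_1,J_1)\). Your remark that the adjustment changes only \(U_1\), so \(\mathcal{E}_K(H)\) is the same set afterwards, is a small clarification that the paper leaves implicit.
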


\begin{proof}
Before the adjustment, the definition of \(DU\) gives
\[
C[I,J]-U_1[J]\geq DU
\]
for every \((I,J)\in\mathcal{E}_K(H)\), with equality for the selected pair \((I_1,J_1)\). Every terminal node \(J\) occurring in \(\mathcal{E}_K(H)\) belongs to the active component and therefore receives the same increment \(DU\). Its reduced cost after the update is
\[
C[I,J]-\bigl(U_1[J]+DU\bigr)
=
\bigl(C[I,J]-U_1[J]\bigr)-DU
\geq0.
\]
Equality holds for \((I_1,J_1)\).
\end{proof}

The lemma concerns the state immediately following the current update. A later adjustment to a larger contracted component can change the numerical value \(C[I,J]-U_1[J]\) for a previously selected link.

\subsection{Phase-structured procedure}
\label{sec:structured-procedure}

Algorithm~\ref{alg:structured-bock} gives the phase organization implemented by \texttt{bock\_algorithm\_readable}. It uses Bock's candidate scan and transfer assignments but replaces negative component labels with the set \(\mathrm{TRACE}\) of processed nodes belonging to components visited during the backward trace.

\begin{algorithm}
\tiny
\caption{Structured reformulation of Bock's algorithm}
\label{alg:structured-bock}
\begin{algorithmic}[1]
\REQUIRE Number of nodes \(N\); origin \(j_0\); cost matrix \(C\), with \(C[I,J]=\infty\) for an infeasible link
\ENSURE Parent array defining a minimum directed spanning tree, or \textsc{Infeasible}

\STATE Set \(U_1[J]\gets0\), \(\mathrm{parents}[J]\gets\bot\), and
\(\mathrm{candidate\_edges}[J]\gets\bot\) for every \(J\)
\STATE Set \(\mathrm{components}[J]\gets J\) for every \(J\)

\FOR{\(K=1\) \TO \(N\)}
    \IF{\(K=j_0\)}
        \STATE Continue with the next column
    \ELSE
        \STATE Set \(\mathrm{cycle\_possible}\gets\mathrm{true}\)

        \WHILE{\(\mathrm{cycle\_possible}\)}
            \STATE Set \(H\gets\mathrm{components}[K]\)
            \STATE Form
            \[
            \mathcal{E}_K(H)
            =
            \{(I,J):
            J\leq K,\ 
            \mathrm{components}[J]=H,\ 
            \mathrm{components}[I]\neq H,\ 
            C[I,J]<\infty\}
            \]

            \IF{\(\mathcal{E}_K(H)=\emptyset\)}
                \RETURN \textsc{Infeasible}
            \ENDIF

            \STATE Select the first minimum in increasing order of \(J\) and then \(I\):
            \[
            (I_1,J_1)
            \in
            \arg\min_{(I,J)\in\mathcal{E}_K(H)}
            \bigl(C[I,J]-U_1[J]\bigr)
            \]
            \STATE Set
            \[
            DU\gets C[I_1,J_1]-U_1[J_1]
            \]

            \FOR{each \(J\leq K\) satisfying \(\mathrm{components}[J]=H\)}
                \STATE Set \(U_1[J]\gets U_1[J]+DU\)
            \ENDFOR

            \STATE Set \(J\gets I_1\) and \(\mathrm{TRACE}\gets\emptyset\)

            \WHILE{\(\mathrm{parents}[J]\neq\bot\) and
                    \(\mathrm{components}[J]\neq H\)}
                \STATE Add every \(J_2\leq K\) satisfying
                \(\mathrm{components}[J_2]=\mathrm{components}[J]\)
                to \(\mathrm{TRACE}\)

                \IF{\(\mathrm{candidate\_edges}[J]=\bot\)}
                    \STATE Set
                    \[
                    \mathrm{candidate\_edges}[J]\gets(I_1,J_1)
                    \]
                \ENDIF

                \STATE Set \(J\gets\mathrm{parents}[J]\)
            \ENDWHILE

            \IF{\(\mathrm{components}[J]=H\)}
                \FOR{each \(J_2\in\mathrm{TRACE}\)}
                    \STATE Set \(\mathrm{components}[J_2]\gets H\)
                \ENDFOR
                \STATE Continue the candidate-search loop
            \ELSE
                \STATE Set \(\mathrm{cycle\_possible}\gets\mathrm{false}\)
            \ENDIF
        \ENDWHILE

        \STATE Set \((I_2,J_2)\gets(\bot,\bot)\)

        \REPEAT
            \FOR{each \(J\leq K\) satisfying
            \(\mathrm{candidate\_edges}[J]=(I_1,J_1)\)}
                \STATE Set
                \[
                \mathrm{candidate\_edges}[J]\gets(I_2,J_2)
                \]
            \ENDFOR

            \STATE Set
            \[
            (I,J)\gets\mathrm{candidate\_edges}[J_1]
            \]
            \STATE Set
            \[
            \mathrm{candidate\_edges}[J_1]\gets(I_2,J_2)
            \]
            \STATE Set
            \[
            I_2\gets\mathrm{parents}[J_1],
            \qquad
            \mathrm{parents}[J_1]\gets I_1
            \]

            \IF{\(I_2\neq\bot\)}
                \STATE Set
                \[
                (J_2,I_1,J_1)\gets(J_1,I,J)
                \]
            \ENDIF
        \UNTIL{\(I_2=\bot\)}
    \ENDIF
\ENDFOR

\STATE Compute
\[
Z\gets\sum_{J\neq j_0}C[\mathrm{parents}[J],J]
\]
\RETURN \(Z\) and \(\mathrm{parents}\)
\end{algorithmic}
\end{algorithm}

\subsection{Circuit contraction}
\label{sec:structured-contraction}

The backward trace begins at the initial node \(I_1\) of the candidate. Whenever it traverses a node outside the active component, it records the entire component containing that node in \(\mathrm{TRACE}\). If the trace reaches a node belonging to the active component, the candidate and the traversed starred links close a directed circuit. All components recorded in \(\mathrm{TRACE}\) are then merged into the active component.

In the Algol procedure, the same operation is represented by negating the labels of the visited components, incrementing \(SS\), and assigning the new label to the active component and all negatively marked components. The readable procedure leaves the labels unchanged during the trace and, upon detecting the circuit, assigns the active label directly to the marked nodes. Both operations produce the same partition.

\begin{lemma}[Progress under contraction]
\label{lem:contraction-progress}
Every contraction merges the active component with at least one distinct component. It therefore strictly decreases the number of components represented among the processed nodes.
\end{lemma}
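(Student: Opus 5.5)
We need to show that when a contraction is triggered, \(\mathrm{TRACE}\) contains at least one node whose component differs from \(H\). The plan is to examine how the inner trace loop of Algorithm~\ref{alg:structured-bock} can reach the contraction branch, and then count components before and after the relabeling.

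\textbf{Step 1: the trace passes through at least one node outside \(H\).}
The trace starts at \(J=I_1\). Because \((I_1,J_1)\in\mathcal{E}_K(H)\), we have \(\mathrm{components}[I_1]\neq H\). The contraction branch requires the loop to exit with \(\mathrm{components}[J]=H\). So the exit condition cannot hold at the initial node, and the loop body must execute at least once. Its first execution takes place at \(J=I_1\), a node whose component \(H'=\mathrm{components}[I_1]\) differs from \(H\). That execution adds to \(\mathrm{TRACE}\) every processed \(J_2\le K\) with \(\mathrm{components}[J_2]=H'\). More generally, every node at which the body executes satisfies \(\mathrm{components}[J]\neq H\) by the loop guard. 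Hence every component recorded in \(\mathrm{TRACE}\) is distinct from \(H\).

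\textbf{Step 2: the recorded set is nonempty.}
This is the one point that needs care: \(\mathrm{TRACE}\) only collects processed nodes \(J_2\le K\), so we must check that \(I_1\) itself is processed. The body executes at \(I_1\) only because \(\mathrm{parents}[I_1]\neq\bot\). Parents are assigned only in the transfer phase, and only to columns \(J_1\le K\) taken from candidate pairs in \(\mathcal{E}_{K'}\) with \(K'\le K\). Bar entries are likewise recorded only at processed nodes. It follows that any node with a parent satisfies \(I_1\le K\), so \(I_1\in\mathrm{TRACE}\).

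\textbf{Step 3: count the components.}
The contraction assigns label \(H\) to every node in \(\mathrm{TRACE}\). The active component is nonempty, since it contains \(K\). By Steps 1 and 2, \(\mathrm{TRACE}\) contains at least one whole component \(H'\neq H\) of processed nodes. Relabeling merges all visited components, each distinct from \(H\) and each at least \(H'\)'s component, into the active one. This removes at least one class from the partition of \(\{1,\dots,K\}\) and creates no new class, so the number of components among processed nodes strictly decreases.

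For the Algol procedure the argument is the same. The negated labels at L8 play the role of \(\mathrm{TRACE}\), the first L8 execution occurs at \(I_1\) with \(\mathrm{SPAN}[I_1]\neq H\), and L7 merges the negatively labeled spans with span \(H\) under the fresh label \(SS\).
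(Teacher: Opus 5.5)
Your proof is correct and follows essentially the same route as the paper's. The trace starts at \(I_1\) outside the active component, so a contraction can only happen after the loop body has run at least once and recorded a component distinct from \(H\), which is then merged into the active one; your Step~2, checking that \(I_1\le K\) so that its component actually enters \(\mathrm{TRACE}\), supplies a detail the paper's proof takes for granted.
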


\begin{proof}
Candidate selection requires
\[
\mathrm{components}[I_1]\neq H,
\]
so the trace begins outside the active component. A contraction occurs only if the trace subsequently reaches a node whose component is \(H\). Before doing so, the trace traverses at least one node belonging to a component distinct from \(H\), and that component is recorded in \(\mathrm{TRACE}\). The contraction assigns the common label \(H\) to the active component and every recorded component. Hence, at least two previously distinct components are replaced by one.
\end{proof}

\begin{corollary}
\label{cor:contraction-bound}
The procedure performs at most \(N-1\) contractions.
\end{corollary}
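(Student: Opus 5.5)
The plan is a potential argument built on Lemma~\ref{lem:contraction-progress}. Let \(P\) denote the number of distinct components represented among all \(N\) nodes. Here the partition of \(V\) is induced by \(\mathrm{components}\), or equivalently by \(\mathrm{SPAN}\) under the partition correspondence of Section~\ref{sec:reformulated-state}. Initially every node carries its own label, so \(P=N\). Since every node lies in some component, \(P\geq1\) at all times.

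The first step is to check that no operation other than a contraction changes the partition. Candidate selection, column adjustment, the trace (which only fills \(\mathrm{TRACE}\) and candidate entries), and the transfer phase never write to \(\mathrm{components}\). Advancing \(K\) also leaves all labels unchanged. The only writes occur in the merge loop after circuit detection. Hence \(P\) is nonincreasing, and it can change only at contractions.

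The second step applies Lemma~\ref{lem:contraction-progress}. The lemma is phrased for processed nodes, so I will argue directly for the whole partition. Each contraction relabels nodes \(J_2\leq K\) lying in components different from \(H\) with the label \(H\). At least one such component exists, because the trace starts at \(I_1\) outside \(H\) and records its component. Consider nodes with index greater than \(K\): they have never been touched by a merge, so they still carry singleton labels, and labels are reused only as the active label \(H\). Therefore every recorded component lies entirely among the indices \(\leq K\), and relabeling it removes it as a separate component. Thus each contraction decreases \(P\) by at least \(1\).

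Combining the two steps, after \(c\) contractions we have \(1\leq P\leq N-c\), which gives \(c\leq N-1\). The main obstacle is the component-containment point in the second step: showing that a component recorded in \(\mathrm{TRACE}\) has no members outside the processed range, so that its relabeling genuinely merges the whole component. If that argument becomes awkward, I would instead count components among the processed nodes only, as the lemma states. That count is at most \(K\leq N\), and it can rise by at most one each time \(K\) increases, since the new node is a singleton. A direct telescoping over \(K\) then gives the same bound: in the stage for column \(K\), the processed count starts at most one above its previous value and stays at least \(1\), so the total number of decreases over the whole run is at most \(N-1\).
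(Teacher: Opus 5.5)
Your proposal is correct and follows the paper's route: start from $N$ singleton components, observe that only contractions change the partition and that no component is ever split, and use Lemma~\ref{lem:contraction-progress} to get a strict decrease at each contraction. Your extra step reconciling the lemma's count over processed nodes with the global count closes a point the paper passes over in one clause. You do this either by noting that unprocessed nodes remain singletons whose labels never appear among processed nodes, or by the telescoping fallback.
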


\begin{proof}
The initial state contains \(N\) singleton components. By Lemma~\ref{lem:contraction-progress}, every contraction strictly decreases the number of represented components, and no operation divides a component after it has been formed.
\end{proof}

This bound concerns the number of contractions, not the total running time. Candidate search may be repeated after each contraction, and its cumulative cost is not determined by the contraction count alone.

\subsection{Star-and-bar transfer}
\label{sec:structured-transfer}

If the trace reaches a node without a starred predecessor, the candidate can be inserted without forming another circuit at the current component level. The transfer begins with the empty displaced pair
\[
(I_2,J_2)=(\bot,\bot).
\]
For the current candidate \((I_1,J_1)\), every stored reference to that pair is redirected to the displaced pair. The procedure then reads the candidate stored at \(J_1\), replaces that entry by the displaced pair, and installs \(I_1\to J_1\) as the new star. If a previous star \(I_2\to J_1\) has been displaced, the stored pair becomes the next candidate and the operation repeats. The transfer ends when the installed candidate displaces no previous star.

The global replacement
\[
\mathrm{candidate\_edges}[J]=(I_1,J_1)
\quad\Longrightarrow\quad
\mathrm{candidate\_edges}[J]\gets(I_2,J_2)
\]
is essential. A bar is a reference to a candidate pair rather than an independent copy of an edge choice. When that candidate is transferred, all references to it must be redirected to the pair it displaced.

\subsection{Operational correspondence}
\label{sec:operational-correspondence}

To compare the Algol procedure and the structured reformulation, convert the implementation's zero-based indices to the paper's one-based indices and identify \(-1\) with the empty value \(0\) used by Bock. States at corresponding phase boundaries are called \emph{aligned} when:

\begin{enumerate}
\item their column adjustments agree;
\item their starred predecessors and candidate pairs agree after the index and empty-value conversion; and
\item their component arrays induce the same partition, regardless of the numerical labels used for the components.
\end{enumerate}

During a backward trace, alignment additionally identifies the nodes carrying negative \(\mathrm{SPAN}\) labels with the nodes marked by \(\mathrm{TRACE}\).

\begin{theorem}[Operational correspondence]
\label{thm:operational-correspondence}
Suppose that Bock's Algol procedure and Algorithm~\ref{alg:structured-bock} receive the same cost matrix and origin, with \(M\) in the Algol procedure corresponding to \(+\infty\) in the structured formulation. Suppose further that both use increasing \(J\), then increasing \(I\), and retain the first candidate attaining the minimum reduced cost. Starting from aligned initial states, the two procedures remain aligned at every corresponding phase boundary. They perform the same column adjustments, represent the same contractions, execute the same star-and-bar transfers, and terminate with the same feasibility decision, objective value, and starred predecessor array.
\end{theorem}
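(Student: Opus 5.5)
The argument is a simulation (bisimulation) proof by induction on the number of executed phases. The phases are candidate search plus adjustment, backward trace, contraction or exit, and transfer. Both executions are cut at matching boundaries: entry to L4 versus the top of the inner \texttt{while} loop; entry to L9 versus the line setting \((I_2,J_2)\gets(\bot,\bot)\); each pass through L10 versus each iteration of the \texttt{repeat} loop; and entry to L3 versus the next value of the \texttt{for} loop. The induction hypothesis is alignment as defined above, together with equal values of \(K\), of the current pair \((I_1,J_1)\), and of the displaced pair \((I_2,J_2)\) (with \(0\leftrightarrow\bot\)). Outside a trace it also asserts that no \(\mathrm{SPAN}\) label is negative and \(\mathrm{TRACE}=\emptyset\). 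The base case is immediate from L2 and the initialization lines.

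For the search phase, first observe that every test in L4 and L5 depends on the labels only through the predicates \(\mathrm{SPAN}[J]=H\) and \(\mathrm{SPAN}[I]\neq H\). These are partition-invariant, because \(H=\mathrm{SPAN}[K]\) and \(\mathrm{components}[K]\) name the same block. Hence the set scanned by L4 is exactly \(\mathcal{E}_K(H)\), where \(C[I,J]<M\) matches \(C[I,J]<\infty\). With the same column-major order and strict comparison, both procedures return the same \((I_1,J_1,DU)\). They also make the same infeasibility decision, since \(DU=M\) if and only if \(\mathcal{E}_K(H)=\emptyset\). The L5 update then coincides with the adjustment loop, and Lemma~\ref{lem:candidate-tightness} is not needed beyond confirming that both states agree.

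For the trace phase, induct on the number of L7--L8 steps. Strengthen the hypothesis to say that the set of nodes \(q\leq K\) with \(\mathrm{SPAN}[q]<0\) equals \(\mathrm{TRACE}\), and that outside these nodes the positive labels still induce the partition. Under this hypothesis the three tests correspond as follows. The L7 test \(\mathrm{SPAN}[J]=H\) matches \(\mathrm{components}[J]=H\): a negated label cannot equal the positive \(H\), and the active block is never negated because the trace stops on reaching it. The L8 test \(I^{*}[J]>0\) matches \(\mathrm{parents}[J]\neq\bot\). The sign test \(\mathrm{SPAN}[J]>0\) decides whether the block is new; adding an already present block to \(\mathrm{TRACE}\) is idempotent, so the structured version may add unconditionally. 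Bar writes and the step \(J\gets I^{*}[J]\) are literally identical.

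The hardest step, in my view, is the contraction and restoration bookkeeping. The Algol loops at L7 and L9 range only over \(q\leq K\), and L8 negates only processed columns. A trace can pass through an unprocessed node \(J>K\) that nevertheless has a star. I expect to rule this out with an auxiliary invariant: \(I^{*}[J]=0\) and \(\bar I[J]=0\) for every \(J>K\), with every node \(J>K\) a singleton block carrying \(\mathrm{SPAN}[J]=J\). The invariant holds because stars and bars are written only at columns \(\leq K\). Transfers write at \(J_1\), which is always the terminal node of some candidate and hence \(\leq K\). The L10 redirection loop and the structured loop both scan \(J\leq K\). A trace node \(J>K\) therefore has \(I^{*}[J]=0\), and it ends the trace in both procedures with no marking.

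Given this invariant, the contraction at L7 gives \(\mathrm{SPAN}[q]\gets SS\) exactly for \(q\in H\cup\mathrm{TRACE}\). This is the same partition as assigning label \(H\) to \(\mathrm{TRACE}\), and \(SS\) is fresh, so the new block collides with no existing one. At exit, L9 unnegates precisely \(\mathrm{TRACE}\), which restores the old partition, matching the structured version discarding \(\mathrm{TRACE}\). Lemma~\ref{lem:contraction-progress} and Corollary~\ref{cor:contraction-bound} guarantee that the contraction loop terminates identically in both procedures.

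The transfer phase is a line-by-line match of L10--L11 with the \texttt{repeat} body. The only subtlety is the identification of \((0,0)\) with \(\bot\) in the equality tests \((\bar I[q],\bar J[q])=(I_1,J_1)\). Since \(I_1\neq0\) whenever L10 executes, empty entries never match in either procedure. The loop terminates under the same condition, \(I_2=0\) versus \(I_2=\bot\).

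Finally, the column loops agree: L3 skips \(j_0\) and exits at \(K=N\), which matches the \texttt{for} loop with its \texttt{continue}. Both procedures therefore reach L99 with equal starred arrays and compute the same \(Z\). If either reports infeasibility, the other does so at the same point.
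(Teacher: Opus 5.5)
Your proposal is correct and follows essentially the same route as the paper's proof: a phase-by-phase simulation in which equal partitions force identical candidate scans and adjustments, negated \(\mathrm{SPAN}\) labels correspond to \(\mathrm{TRACE}\), the fresh label \(SS\) and the reused active label induce the same merged partition, and the transfer is matched pass by pass. Your extra invariants (unprocessed columns stay unstarred singletons, the negated set equals \(\mathrm{TRACE}\), and the \(0\leftrightarrow\bot\) identification in the bar tests) make explicit what the paper leaves implicit. One small correction: assert equality of \((I_2,J_2)\) and emptiness of \(\mathrm{TRACE}\) only where these variables are live, because L8 uses \(J_2\) as its loop variable and Algorithm~\ref{alg:structured-bock} clears \(\mathrm{TRACE}\) only at the start of each trace.
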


\begin{proof}
The initial states are aligned. Both procedures initialize the column adjustments to zero, leave every starred and barred entry empty, and represent every node as a singleton component.

Assume that the states are aligned at the beginning of a candidate-search phase. Since their component arrays induce the same partition, they examine the same columns in the active component and the same initial nodes outside it. Their column adjustments also agree. They therefore compute the same reduced costs over the same candidate set. The common scan order and first-minimum rule yield the same \((I_1,J_1)\) and \(DU\). Both procedures then add \(DU\) to precisely the processed columns belonging to the active component.

The backward traces begin at the same node \(I_1\) and follow the same starred predecessors. Whenever the trace visits a component outside the active component, the Algol procedure negates its \(\mathrm{SPAN}\) label, while the structured procedure marks the same component in \(\mathrm{TRACE}\). Repeatedly marking a component has no additional effect in either representation. Since the bar arrays are aligned, both procedures also record \((I_1,J_1)\) at the same trace nodes and preserve the same nonempty entries.

The traces consequently stop under the same condition. If they return to the active component, the Algol procedure assigns a fresh label to the active and negatively marked components, whereas the structured procedure assigns the active label to the active and trace-marked components. Although the numerical labels differ, the resulting component partitions are identical. The procedures therefore resume candidate selection from aligned states. If the traces instead reach an empty starred predecessor, the Algol procedure restores the negative labels and the structured procedure discards \(\mathrm{TRACE}\); neither operation changes the component partition.

The transfer phases begin with the same candidate and the same empty displaced pair. In each transfer iteration, both procedures replace every bar referring to the current candidate by the displaced pair, read and replace the bar at \(J_1\), displace the same starred predecessor, and install the same candidate. If the displaced predecessor is nonempty, both assign the same next candidate and repeat. If it is empty, both terminate the transfer and advance to the next column. Induction over the transfer iterations therefore preserves alignment.

Induction over candidate searches, contractions, transfers, and column advances establishes alignment at every corresponding phase boundary. If the candidate set is empty, both procedures report infeasibility. Otherwise, they terminate with the same starred predecessor for every nonorigin node and hence compute the same objective value.
\end{proof}

\begin{corollary}[Correctness of the reformulation]
\label{cor:reformulation-correctness}
Under Bock's input assumptions, Algorithm~\ref{alg:structured-bock} returns the same minimum-cost arborescence as the original procedure, including its scan-order resolution of ties, or reports the same instance infeasible.
\end{corollary}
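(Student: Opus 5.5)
The corollary follows from Theorem~\ref{thm:operational-correspondence} once Bock's output is known to be correct. Bock's procedure returns a minimum-cost arborescence, a fact established in \citet{bock-1971-an-algorithm} as its primal--dual correctness. The corollary is therefore a transfer statement: whatever the Algol procedure outputs on an instance satisfying Bock's input assumptions, the structured reformulation outputs the same thing. I would not attempt to prove optimality from scratch. The proof would cite Bock's correctness for the original procedure and then apply the correspondence theorem.

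The first step is to check the hypotheses of Theorem~\ref{thm:operational-correspondence}. The same cost matrix and origin are given to both procedures, and $M$ is identified with $+\infty$. Both procedures use the column-major scan with the strict-inequality first-minimum rule, as stated for Algorithm~\ref{alg:structured-bock}, so the tie-breaking agrees. The initialization lines of Algorithm~\ref{alg:structured-bock} produce a state aligned with L2:
\begin{itemize}
\item $U_1=0$;
\item $\bot$ corresponds to Bock's $0$;
\item the singleton partitions agree.
\end{itemize}
The theorem then gives identical feasibility decisions.

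The infeasible case is direct: L98 fires exactly when $\mathcal{E}_K(H)=\emptyset$ in the aligned state, so both procedures report infeasibility. In the feasible case, the theorem gives the same final parent array as $I^{*}$ and hence the same $Z$. Since Bock's $I^{*}$ encodes a minimum-cost arborescence, so does $\mathrm{parents}$, and any tie-breaking among optimal trees is inherited through the scan order.

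The main obstacle is termination. The correspondence theorem is a step-by-step simulation, so it only says something about the final output if the structured procedure halts, and halting is also needed so that ``same output'' is meaningful. I would argue termination in three parts:
\begin{itemize}
\item By Corollary~\ref{cor:contraction-bound}, each column $K$ executes at most $N-1$ contraction iterations of the inner while loop, after which either infeasibility is reported or a trace ends at an unstarred node.
\item The trace loop terminates because the starred links form a forest over the contracted components. This is the invariant stated for Bock's procedure, and it transfers through alignment.
\item The transfer loop terminates because, under alignment, it mirrors Bock's L10--L11 chain, which ends at a column whose previous star is empty.
\end{itemize}
If this reliance on Bock's invariants were thought too indirect, I would instead bound the transfer chain by the number of processed columns carrying nonempty bars.
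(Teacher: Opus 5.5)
Your proposal is correct and follows the paper's own proof: Theorem~\ref{thm:operational-correspondence} gives the same feasibility decision and the same starred predecessor array, and optimality is then inherited from the correctness of Bock's original procedure. The separate termination analysis is unnecessary, because the theorem is a lockstep simulation of a procedure that halts and already states that both procedures terminate with the same outcome; moreover, your fallback bound via the number of nonempty bars would not work as stated, since every transfer pass after the first writes the nonempty displaced pair into the bar entry at \(J_1\) (for example, \((\bar I[3],\bar J[3])\gets(3,2)\) in the three-node example), so that count does not decrease along the chain.
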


\begin{proof}
Theorem~\ref{thm:operational-correspondence} establishes that the structured procedure and Bock's procedure produce the same starred predecessor array and feasibility decision. The result therefore follows from the correctness of Bock's original algorithm.
\end{proof}

\section{Conclusion}
\label{sec:conclusion}

This paper has provided a self-contained tutorial on Bock's 1971 algorithm for minimum directed spanning trees. The presentation preserves the matrix orientation, variables, statement labels, tests, assignment order, and control flow of the original Algol procedure while explaining the roles of its column adjustments, starred predecessors, barred references, and span labels. It thereby makes explicit how reduced-cost selection, backward tracing, circuit contraction, and star-and-bar transfer jointly construct a minimum-cost arborescence.

The three-node example isolates the central mechanism of the algorithm: least-cost provisional links form a circuit, the circuit is represented by a common span label, and a link entering the contracted span initiates the transfers that resolve the provisional selections. The complete trace of Bock's original ten-node matrix follows every candidate search, column adjustment, backward trace, contraction, and transfer from initialization to the reported optimum. Together, these examples connect the individual Algol statements to the evolving state of the computation.

The structured reformulation expresses the same computation through explicit phases and component-based state. Its readable implementation replaces temporary sign changes of \(\mathrm{SPAN}\) with an explicit trace mask and replaces fresh contraction labels with the persistent label of the active component. The candidate-tightness result characterizes the immediate effect of each column adjustment, while the contraction result establishes strict progress in the component partition. The operational-correspondence theorem shows that, under the same scan order and tie-breaking convention, the reformulated procedure and the Algol listing select the same candidates, apply the same adjustments and transfers, represent the same contracted components, and return the same arborescence and objective value.

The line-by-line account and the structured reformulation provide complementary descriptions of Bock's method. The former makes the original program directly traceable; the latter separates its recurring operations and establishes their correspondence independently of the numerical component labels used to represent contractions. The resulting account supports faithful implementation and verification while preserving the distinctive computational organization of Bock's original algorithm.


\end{document}